\newcommand{\kn}{{\it ker}}
\newcommand{\chn}{{\it chn}}
\renewcommand{\sp}{{\it sp}}
\DeclareMathOperator{\spn}{span}
\DeclareMathOperator{\diag}{diag}
\DeclareMathOperator\erf{erf}
\renewcommand{\theequation}{\arabic{section}.\arabic{equation}} 
\newtheorem{theorem}{Theorem}[section]
\newtheorem{lemma}[theorem]{Lemma}
\newtheorem{definition}{Definition}[section]
\newtheorem{corollary}[theorem]{Corollary}
\definecolor{darkgreen}{rgb}{0,0.6,0}
\definecolor{darkred}{rgb}{0.7,0.0,0}
\definecolor{darkblue}{rgb}{0,0.0,0.6}
\newcommand{\jcom}[1]{\textcolor{darkgreen}{}}
\newcommand{\xcom}[1]{\textcolor{blue}{}}
\newcommand{\jp}[1]{\textcolor{darkred}{}}
\newcommand{\sss}[1]{\textcolor{orange}{}}
\icmltitlerunning{Mean Field Theory of Convolutional Neural Networks}
\begin{document}

\twocolumn[
\icmltitle
{
Dynamical Isometry and a Mean Field Theory of CNNs:\\
How to Train 10,000-Layer Vanilla Convolutional Neural Networks
}

% It is OKAY to include author information, even for blind
% submissions: the style file will automatically remove it for you
% unless you've provided the [accepted] option to the icml2018
% package.

% List of affiliations: The first argument should be a (short)
% identifier you will use later to specify author affiliations
% Academic affiliations should list Department, University, City, Region, Country
% Industry affiliations should list Company, City, Region, Country

% You can specify symbols, otherwise they are numbered in order.
% Ideally, you should not use this facility. Affiliations will be numbered
% in order of appearance and this is the preferred way.
\icmlsetsymbol{equal}{*}

\begin{icmlauthorlist}
\icmlauthor{Lechao Xiao}{brain,gbr}%{gbr}
\icmlauthor{Yasaman Bahri}{brain,gbr}%\samethanks[1] 
\icmlauthor{Jascha Sohl-Dickstein}{brain}
\icmlauthor{Samuel S. Schoenholz}{brain}
\icmlauthor{Jeffrey Pennington}{brain}
\end{icmlauthorlist}

\icmlaffiliation{brain}{Google Brain}
\icmlaffiliation{gbr}{Work done as part of the Google AI Residency program (g.co/airesidency)}

\icmlcorrespondingauthor{Lechao Xiao}{xlc@google.com}

\vskip 0.3in
]

% this must go after the closing bracket ] following \twocolumn[ ...

% This command actually creates the footnote in the first column
% listing the affiliations and the copyright notice.
% The command takes one argument, which is text to display at the start of the footnote.
% The \icmlEqualContribution command is standard text for equal contribution.
% Remove it (just {}) if you do not need this facility.

\printAffiliationsAndNotice{}  % leave blank if no need to mention equal contribution
%\printAffiliationsAndNotice{\icmlEqualContribution} % otherwise use the standard text.

\begin{abstract}

In recent years, state-of-the-art methods in computer vision have utilized increasingly deep convolutional neural network architectures (CNNs), with some of the most successful models employing hundreds or even thousands of layers. A variety of pathologies such as vanishing/exploding gradients make training such deep networks challenging. While residual connections and batch normalization do enable training at these depths, it has remained unclear whether such specialized architecture designs are truly necessary to train deep CNNs. In this work, we demonstrate that it is possible to train vanilla CNNs with ten thousand layers or more simply by using an appropriate initialization scheme. We derive this initialization scheme theoretically by developing a mean field theory for signal propagation and by characterizing the conditions for \emph{dynamical isometry}, the equilibration of singular values of the input-output Jacobian matrix. These conditions require that the convolution operator be an orthogonal transformation in the sense that it is norm-preserving. We present an algorithm for generating such random initial orthogonal convolution kernels and demonstrate empirically that they enable efficient training of extremely deep architectures.
\end{abstract}

\section{Introduction}
\label{intro}

Deep convolutional neural networks (CNNs) have been crucial to the success of deep learning. Architectures based on CNNs have achieved unprecedented accuracy in domains ranging across computer vision~\cite{krizhevsky2012}, speech recognition~\cite{hinton2012speech}, natural language processing~\cite{collobert2011natural, kalchbrenner2014convolutional, kim2014convolutional}, and recently even the board game Go~\cite{silver2016,silver2017mastering}.

    \begin{figure}[t!]
    \begin{center}
    \vspace{0.1cm}
    \centerline{
    \hspace{-0.2cm}\includegraphics[width=0.86\columnwidth]{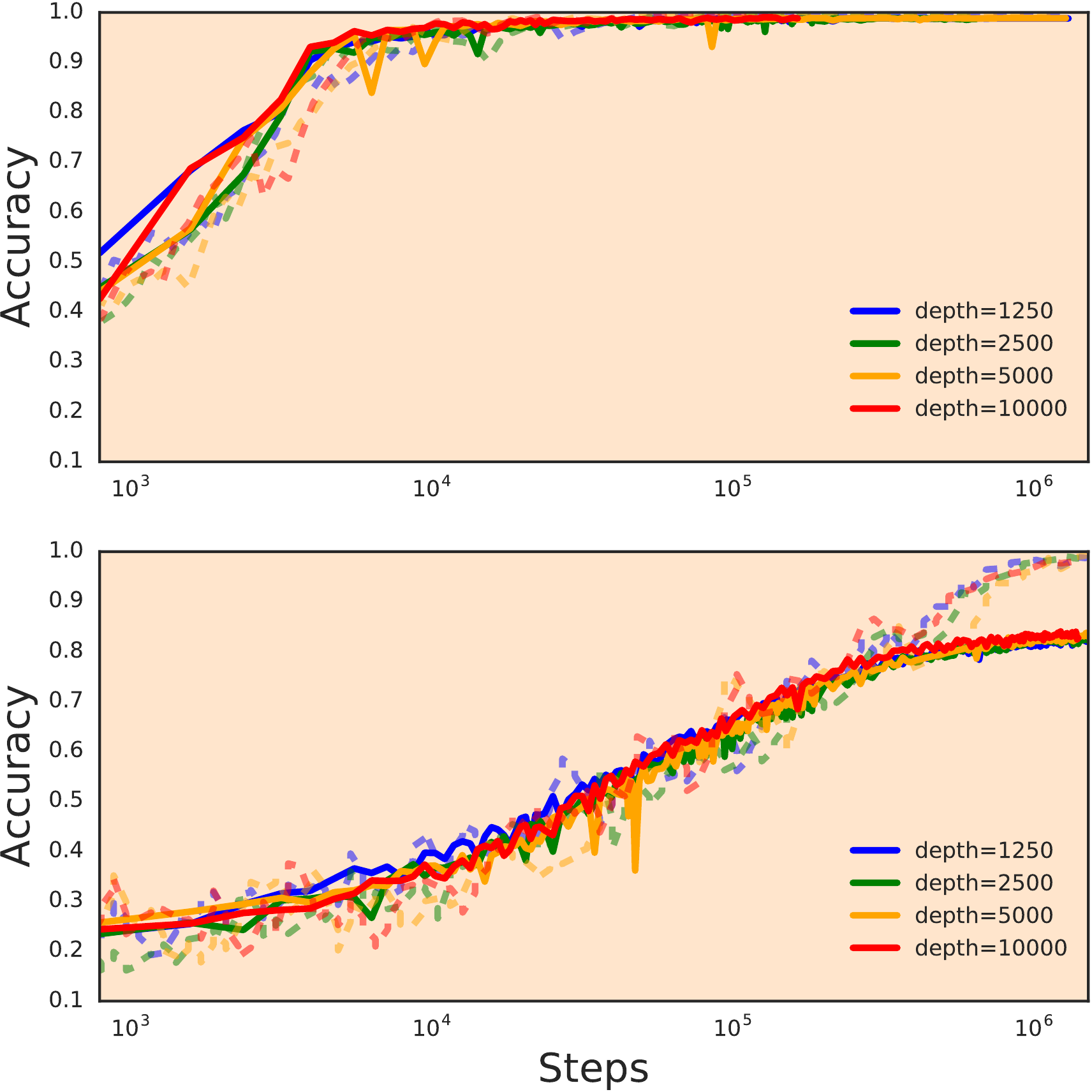}}
    \caption{
    Extremely deep CNNs can be trained without the use of batch normalization or residual connections simply by using a Delta-Orthogonal initialization with critical weight and bias variance and appropriate (in this case, $\operatorname{tanh}$) nonlinearity.  
    Test (solid) and training (dashed) curves on MNIST (top) and CIFAR-10 (bottom) for depths $1{,}250$, $2{,}500$, $5{,}000$, and $10,{000}$.}
    \label{fig:cm10000}
    \end{center}
    \vskip -0.37in
    \end{figure}
The performance of deep convolutional networks has improved as these networks have been made ever deeper. For example, some of the best-performing models on ImageNet~\cite{imagenet_cvpr09} have employed hundreds or even a thousand layers~\cite{he2016deep,he2016identity}. However, these extremely deep architectures have been trainable only in conjunction with techniques like residual connections~\cite{he2016deep} and batch normalization~\cite{ioffe2015batch}. It is an open question whether these techniques qualitatively improve model performance or whether they are necessary crutches that solely make the networks easier to train. In this work, we study vanilla CNNs using a combination of theory and experiment to disentangle the notions of trainability and generalization performance. In doing so, we show that through a careful, theoretically-motivated initialization scheme, we can train vanilla CNNs with 10,000 layers using no architectural tricks.

Recent work has used mean field theory to build a theoretical understanding of neural networks with random parameters~\citep{poole2016,schoenholz2016,yang2017,schoenholz2017correspondence,Karakida2018,hayou2018selection,hanin2018start, yang2018deep}.  These studies revealed a maximum depth through which signals can propagate at initialization, and verified empirically that networks are trainable precisely when signals can travel all the way through them. In the fully-connected setting, the theory additionally predicts the existence of an order-to-chaos phase transition in the space of initialization hyperparameters. For networks initialized on the critical line separating these phases, signals can propagate indefinitely and arbitrarily deep networks can be trained. While mean field theory captures the ``average'' dynamics of random neural networks it does not quantify the scale of gradient fluctuations that are crucial to the stability of gradient descent. A related body of work~\citep{saxe2013exact,pennington2017,PenningtonSG18} has examined the input-output Jacobian and used random matrix theory to quantify the distribution of its singular values in terms of the activation function and the distribution from which the initial random weight matrices are drawn. These works concluded that networks can be trained most efficiently when the Jacobian is well-conditioned, a criterion that can be achieved with orthogonal, but not Gaussian, weight matrices. Together, these approaches have allowed researchers to efficiently train extremely deep network architectures, but so far they have been limited to neural networks composed of fully-connected layers. 

In the present work, we continue this line of research and extend it to the convolutional setting.
We show that a well-defined mean-field theory exists for convolutional networks in the limit that the number of channels is large, even when the size of the image is small. Moreover, convolutional networks have precisely the same order-to-chaos transition as fully-connected networks, with vanishing gradients in the ordered phase and exploding gradients in the chaotic phase. And just like fully-connected networks, very deep CNNs that are initialized on the critical line separating those two phases can be trained with relative ease.

Moving beyond mean field theory, we additionally show that the random matrix analysis of~\cite{pennington2017,PenningtonSG18} carries over to the convolutional setting. Furthermore, we identify an efficient construction from the wavelet literature that generates random orthogonal matrices with the block-circulant structure that corresponds to convolution operators. This construction facilitates random orthogonal initialization for convolulational layers and enables good conditioning of the end-to-end Jacobian matrices of arbitrarily deep networks. We show empirically that networks with this initialization can train significantly more quickly than standard convolutional networks.

Finally, we emphasize that although the order-to-chaos phase boundaries of fully-connected and convolutional networks look identical, the underlying mean-field theories are in fact quite different. In particular, a novel aspect of the convolutional theory is the existence of multiple depth scales that control signal propagation at different spatial frequencies. In the large depth limit, signals can only propagate along modes with minimal spatial structure; all other modes end up deteriorating, even at criticality. We hypothesize that this type of signal degradation is harmful for generalization, and we develop a modified initialization scheme that allows for balanced propagation of signals among all frequencies. In this scheme, which we call Delta-Orthogonal initialization, the orthogonal kernel is drawn from a spatially non-uniform distribution, and it allows us to train vanilla CNNs of 10,000 layers or more with no degradation in performance.

\section{Theoretical results}\label{sec:theory}
In this section, we first derive a mean field theory for signal propagation in random convolutional neural networks. We will follow the general methodology established in~\citet{poole2016,schoenholz2016,yang2017}. We will then arrive at a theory for the singular value distribution of the Jacobian following~\citet{pennington2017, PenningtonSG18}. Together, this will allow us to derive theoretically motivated initialization schemes for convolutional neural networks that we call orthogonal kernels and Delta-Orthogonal kernels\footnote{An example implementation of a deep $\tanh$ network initialized critically using the Delta-Orthogonal kernel is provided at \url{https://github.com/brain-research/mean-field-cnns}.}. Later we will demonstrate experimentally that these kernels outperform existing initialization schemes for very deep vanilla convolutional networks.

\subsection{A mean field theory for CNNs}
\label{theory:meanfield0}
\subsubsection{Recursion relation for covariance}
\label{theory:meanfield}
    Consider an $L$-layer 1D\footnote{For notational simplicity, we consider one-dimensional convolutions, but the $d$-dimensional case proceeds identically.} CNN with periodic boundary conditions, filter width $2k+1$, number of channels $c$, spatial size $n$, per-layer weight tensors $\omega^l\in \mathbb{R}^{(2k+1) \times c \times c}$, and biases $b^l \in \mathbb{R}^{c}$. Let $\phi:\mathbb R \to \mathbb R$ be the activation function %\ycom{specify need not be bounded?} 
    and let $h^l_j(\alpha)$ denote the pre-activation unit at layer $l$, channel $j$, and spatial location $\alpha \in \sp$, where we define the set of spatial locations $\sp = \{1, ..., n\}$. The forward-propagation dynamics can be described by the recurrence relation,
    \begin{align}
    \label{eqn:hiter}
    h^{l+1}_j(\alpha) &= \sum_{\substack{i\in\chn \\ \beta \in \kn}} \phi(h^{l}_i(\alpha+\beta)) \omega_{ij}^{l+1}(\beta)+b_j^{l+1},
    \end{align}
    where $\kn =\{\beta\in\mathbb Z: |\beta|\leq k\}$ and
    $\chn=\{1, \dots, c\}$. At initialization, we take the weights $\omega_{ij}^{l}\left(\beta\right)$ to be drawn i.i.d. from the Gaussian $\mathcal N(0, \sigma_\omega^2/(c(2k+1)))$ and the biases $b^{l}_j$ to be drawn i.i.d. from the Gaussian $ \mathcal N(0, \sigma_b^2)$. 
    Note that $h^{l}_i(\alpha) = h^{l}_i(\alpha+n) =h^{l}_i(\alpha-n)$ since we assume periodic boundary conditions. We wish to understand how signals propagate through these networks. As in previous work in this vein, we will take the large network limit, which in this context corresponds to the number of channels $c\to\infty$. This allows us to use powerful theoretical tools such as mean field theory and random matrix theory. Moreover, this approximation has been shown to give results that agree well with experiments on finite-size networks.

    In the limit of a large number of channels, the central limit theorem implies that the pre-activation vectors $h_j^l$ are i.i.d. Gaussian with mean zero and covariance matrix $\Sigma^l_{\alpha,\alpha'} = \mathbb E[h^l_j(\alpha)h^l_j(\alpha')]$. Here, the expectation is taken over the weights and biases and it is independent of the channel index $j$.
    In this limit, the covariance matrix takes the form (see Supplemental Materials (SM)), 
    \begin{equation}
    \Sigma^{l+1}_{\alpha, \alpha'} 
    =\sigma_b^2 + \frac{\sigma_w^2} {2k\!+\!1}\! \sum_{\beta\in {\it ker}} \mathbb E\!\left[ \phi(h_j^l(\alpha\!+\!\beta))\phi(h_j^l(\alpha'\!+\!\beta)) \right],
    \end{equation}
    and is independent of $j$. A more compact representation of this equation can be given as,
    \begin{equation}
    \label{eqn:AC-map}
        \bm \Sigma^{l+1} \equiv \mathcal A\star \mathcal C (\bm\Sigma^{l})\,,
    \end{equation}
    where $\mathcal{A} = \frac{1}{2k+1}{\bf I}_{2k+1}$ and $\star$ denotes 2D circular cross-correlation, i.e. for any matrix ${\bm C}$, $\mathcal{A}\star{\bm C}$ is defined as,
    \begin{equation}
    \label{eqn:A-map}
    [\mathcal A\star {\bf C}]_{\alpha, \alpha'} = \frac{1}{2k+1}\sum_{\beta\in\kn} C_{\alpha+\beta, \alpha'+\beta}\,.
    \end{equation}
     The function $\mathcal C:\text{PSD}_n\to\text{PSD}_n$ is related to the $\mathcal C$-map defined in~\citet{poole2016} (see also \cite{daniely2016}) and is given by,
    \begin{equation}\label{eq:cmap}
    [\mathcal C(\bm \Sigma)]_{\alpha, \alpha'} = \sigma_\omega^2\,\mathbb E_{\bm h \sim\mathcal N(0,\bm \Sigma)} \left[ \phi(h_\alpha)\phi(h_{\alpha'}) \right] +\sigma_b^2.
    \end{equation}
    All but the two dimensions $\alpha$ and $\alpha'$ in eqn.~\eqref{eq:cmap} marginalize, so, as in~\cite{poole2016}, the $\mathcal{C}$-map can be computed by a two-dimensional integral. Unlike in~\citep{poole2016}, $\alpha$ and $\alpha'$ do not correspond to different examples but rather to different spatial positions and eqn.~\eqref{eq:cmap} characterizes how signals from a single input propagate through convolutional networks in the mean-field approximation\footnote{The multi-input analysis proceeds in precisely the same manner as we present here, but comes with increased notational complexity and features no qualitatively different behavior, so we focus our presentation on the single-input case.}.
    
    \subsubsection{Dynamics of signal propagation}
    We now seek to study the dynamics induced by eqn.~\eqref{eqn:AC-map}. Schematically, our approach will be to identify fixed points of eqn.~\eqref{eqn:AC-map} and then linearize the dynamics around these fixed points. These linearized dynamics will dictate the stability and rate of decay towards the fixed points, which determines the depth scales over which signals in the network can propagate.
    
    \citet{schoenholz2016} found that for many activation functions $\phi$ (e.g. $\tanh$) and any choice of $\sigma_w$ and $\sigma_b$, the ${\mathcal C}$-map has a fixed point $\bm \Sigma^*$ (i.e. $\mathcal C(\bm \Sigma^*) = \bm \Sigma^*$) of the form, 
    \begin{equation}
    \label{eqn:fixed-point-equation}
        {\Sigma}^*_{\alpha,\alpha'} = q^*(\delta_{\alpha,\alpha'} + (1-\delta_{\alpha, \alpha'})c^*)\,,
    \end{equation} 
    where $\delta_{a,b}$ is the Kronecker-$\delta$, $q^*$ is the fixed-point variance of a single input, and $c^*$ is the fixed-point correlation between two inputs. It follows from the form of eqn.~(\ref{eqn:A-map}) that ${\bm \Sigma}^*$ is also a fixed point of the layer-to-layer covariance map in the convolutional case (eqn.~(\ref{eqn:AC-map})), i.e. ${\bm \Sigma}^* = \mathcal{A}\star \mathcal{C}({\bm \Sigma^*})$.

    To analyze the dynamics of the iteration map~(\ref{eqn:AC-map}) near the fixed point $\bm \Sigma^*$, we define $\bm \epsilon^l = \bm \Sigma^* - \bm \Sigma^l$ and expand eqn.~\eqref{eqn:AC-map} to lowest order in $\bm \epsilon.$ This expansion requires the Jacobian of the $\mathcal{C}$-map evaluated at the fixed point, the properties of which we analyze in the SM. In brief, perturbations in $q^*$ and $c^*$ evolve independently and the Jacobian decomposes into a diagonal eigenspace $V_{\text{d}}$ with eigenvalue $\chi_{q^*}$, and an off-diagonal eigenspace $V_{\text{o.d.}}$ with eigenvalue $\chi_{c^*}$. The eigenvalues are given by\footnote{By the symmetry of $\bm \Sigma^*$, these expectations are independent of spatial location and of the choice of $h_1$ and $h_2$.},
    \begin{equation}
    \begin{split}
    \label{eqn:chi_c}
        \chi_{c^*} &= \sigma_w^2\mathbb E_{\bm h\sim\mathcal N(0,\bm C^*)}[\phi'(h_1)\phi'(h_2)]\,,\; h_1\neq h_2\,,\\
        \chi_{q^*} &= \sigma_w^2\mathbb E_{\bm h\sim\mathcal N(0,\bm C^*)}[\phi''(h_1)\phi(h_1)+\phi'(h_1)^2]\,,
    \end{split}
    \end{equation}
    and the eigenspaces have bases,
    \begin{equation}
    \begin{split}
        \label{eqn:decomposition}
        B_{\text{d}} &= \{M^{\alpha,\alpha}: M^{\alpha,\alpha}_{\bar\alpha, \bar\alpha'} = \gamma \delta_{\alpha, \bar\alpha}\delta_{\alpha, \bar\alpha'} +\delta_{\bar\alpha, \alpha} + \delta_{\bar\alpha', \alpha}\}_{\scriptscriptstyle{\alpha\in\sp}}\\
        B_{\text{o.d.}} &= \{M^{\alpha, \alpha'}: M^{\alpha, \alpha'}_{{\bar\alpha, \bar\alpha'}} = \delta_{\alpha, \bar\alpha}\delta_{\alpha', \bar\alpha'}+\delta_{\alpha, \bar\alpha'}\delta_{\alpha', \bar\alpha}\}_{\alpha \neq\alpha'}\,,
    \end{split}
    \end{equation}
    i.e. $V_\text{d} = \spn(B_\text{d})$ and $V_{\text{o.d}} = \spn(B_{\text{o.d.}})$. Note that $\chi_{q^*}$ and $\chi_{c^*}$ also were found in~\citet{schoenholz2016} to control signal propagation in the fully-connected case. The constant $\gamma$ is given in Lemma \ref{lemma:representation} of the SM but does not concern us here. This eigen-decomposition implies that the layer-wise deviations from the fixed point evolve under eqn.~(\ref{eqn:AC-map}) as,
    \begin{equation}\label{eqn:dyn_linearized}
    \bm \epsilon^{l+1} = \chi_{q^*}\mathcal A\star{\bm \epsilon}_{\text{d}}^l+\chi_{c^*}\mathcal A\star{\bm \epsilon}_{\text{o.d.}}^l + \mathcal O((\bm\epsilon^l)^2)\,,
    \end{equation}
    where ${\bm \epsilon}_{\text{d}}$ and ${\bm \epsilon}_{\text{o.d.}}$ are decomposition of ${\bm \epsilon}$ into the eigenspaces $V_{\text{d}}$ and $V_{\text{o.d.}}$.
    
    Eqn.~(\ref{eqn:dyn_linearized}) defines the linear dynamics of random convolutional neural networks near their fixed points and is the basis for the in-depth analysis of the following subsections.
    
    \subsubsection{Multi-dimensional signal propagation}
    \label{sec:multi}
    In the fully-connected setting, the dynamics of signal propagation near the fixed point are governed by scalar evolution equations. In contrast, the convolutional setting enjoys much richer dynamics, as eqn.~(\ref{eqn:dyn_linearized}) describes a multi-dimensional system that we now analyze.
    
    It follows from eqns.~(\ref{eqn:A-map}) and~(\ref{eqn:decomposition}) (see also the SM) that $\mathcal A$ does not mix the diagonal and off-diagonal eigenspaces, i.e. $\mathcal{A}\star{\bm \epsilon}_{\text{d}} \in V_\text{d}$ and $\mathcal{A}\star{\bm \epsilon}_{\text{o.d.}} \in V_\text{o.d.}$. To see this, note that for $M^{\alpha,\alpha'}\in V_{\text{o.d.}}$, the definition implies $M^{\alpha,\alpha'}_{\bar\alpha + \beta,\bar\alpha'+\beta} = M^{\alpha - \beta,\alpha' - \beta}_{\bar\alpha,\bar\alpha'}$. This property ensures that $\mathcal A\star M^{\alpha,\alpha'}$ can be expressed as a linear combination of matrices in $V_{\text{o.d.}}$, which means it also belongs to $V_{\text{o.d}}$. The same argument applies to $M^{\alpha,\alpha}\in V_{\text {d.}}$. As a result, these eigenspaces evolve entirely independently under the linearization of the covariance iteration map~(\ref{eqn:AC-map}).
    
    Let $l_0$ denote the depth over which transient effects persist and after which eqn.~(\ref{eqn:dyn_linearized}) accurately describes the linearized dynamics. Therefore, at depths larger than $l_0$, we have
    \begin{equation}\label{eqn:dyn_linearized_power_qc}
        \bm \epsilon^{l} \approx \underbrace{\mathcal A\star\cdots\mathcal A\,\star}_{l - l_0}\,(\chi_{q^*}^{l-l_0}\bm\epsilon_{\text{d}}^{l_0} + \chi_{c^*}^{l-l_0}\bm\epsilon_{\text{o.d.}}^{l_0})\,.
    \end{equation}
    This matrix-valued equation is still somewhat complicated owing to the nested applications of $\mathcal A$. To further elucidate the dynamics, we can move to a Fourier basis, which diagonalizes the circular cross-correlation operator and decouples the modes of eqn.~(\ref{eqn:dyn_linearized_power_qc}). In particular, let $\mathcal{F}$ denote the 2D discrete Fourier transform and $\tilde{\epsilon}_{\alpha,\alpha'} \equiv \mathcal{F}({\bm \epsilon})_{\alpha,\alpha'}$ denote a Fourier mode of $\bm \epsilon$. Then eqn.~(\ref{eqn:dyn_linearized_power_qc}) becomes a simple scalar equation,
    \begin{equation}
    \label{eqn:dyn_linearized_freq}
        \tilde{\epsilon}_{\alpha,\alpha'}^l \approx (\lambda_{\alpha,\alpha'}\chi_{q^*})^{l-l_0}[\tilde{\epsilon}_{\text{d}}^{l_0}]_{\alpha,\alpha'} + (\lambda_{\alpha,\alpha'}\chi_{c^*})^{l-l_0}[\tilde{\epsilon}_{\text{o.d.}}^{l_0}]_{\alpha,\alpha'}\,,
    \end{equation}
    with $\lambda_{\alpha,\alpha'} = \mathcal{F}(\mathcal{A})^*_{\alpha,\alpha'}$. Thus, the linearized dynamics of convolutional neural networks decouple into independently-evolving Fourier modes that evolve near the fixed point at frequency-dependent rates.

    \subsubsection{Fixed-point analysis}
    
    The stability of the fixed point $\Sigma^*$ is determined by whether nearby points move closer or farther from $\Sigma^*$ under the dynamics described by eqn.~(\ref{eqn:dyn_linearized}). Eqn.~(\ref{eqn:dyn_linearized_freq}) shows that this condition depends on the whether the quantities $\lambda_{\alpha,\alpha'}\chi_{q^*}$ and $\lambda_{\alpha,\alpha'}\chi_{c^*}$ are less than or greater than one.
    
    Since $\mathcal A$ is a diagonal matrix, the eigenvalues $\lambda_{\alpha,\alpha'}$ have a specific structure. In particular, the set of eigenvalues is comprised of $n$ copies of the 1D discrete Fourier transform of the diagonal entries of $\mathcal A$. Furthermore, since the diagonal entries of $\mathcal A$ are non-negative and sum to one, their Fourier coefficients have absolute value no larger than one and the zero-frequency coefficient is equal to one; see Figure \ref{fig:depth} for the full distribution in the case of 2D convolutions. It follows that the fixed point $\Sigma^*$ will be stable if and only if $\chi_{q^*} < 1$ and $\chi_{c^*} < 1$.
    
    These stability conditions are precisely the ones found to govern  fully-connected networks~\cite{poole2016, schoenholz2016}. Moreover, the fixed point matrix $\Sigma^*$ is also the same as in the fully-connected case. Together, these observations imply that the entire fixed-point structure of the convolutional case is identical to that of the fully-connected case. In particular, based on the results of~\cite{poole2016}, we can immediately conclude that the $(\sigma_w, \sigma_b)$ hyperparameter plane is separated by the line $\chi_1 = 1$ into an ordered phase with $c^* = 1$ in which all pixels approach the same value, and a chaotic phase with $c^* < 1$ in which the pixels become decorrelated with one another; see the SM for a review of this phase diagram analysis.

    \begin{figure}[t]
    \vspace{0.1cm}
     \centering
     \includegraphics[width=0.9\columnwidth]{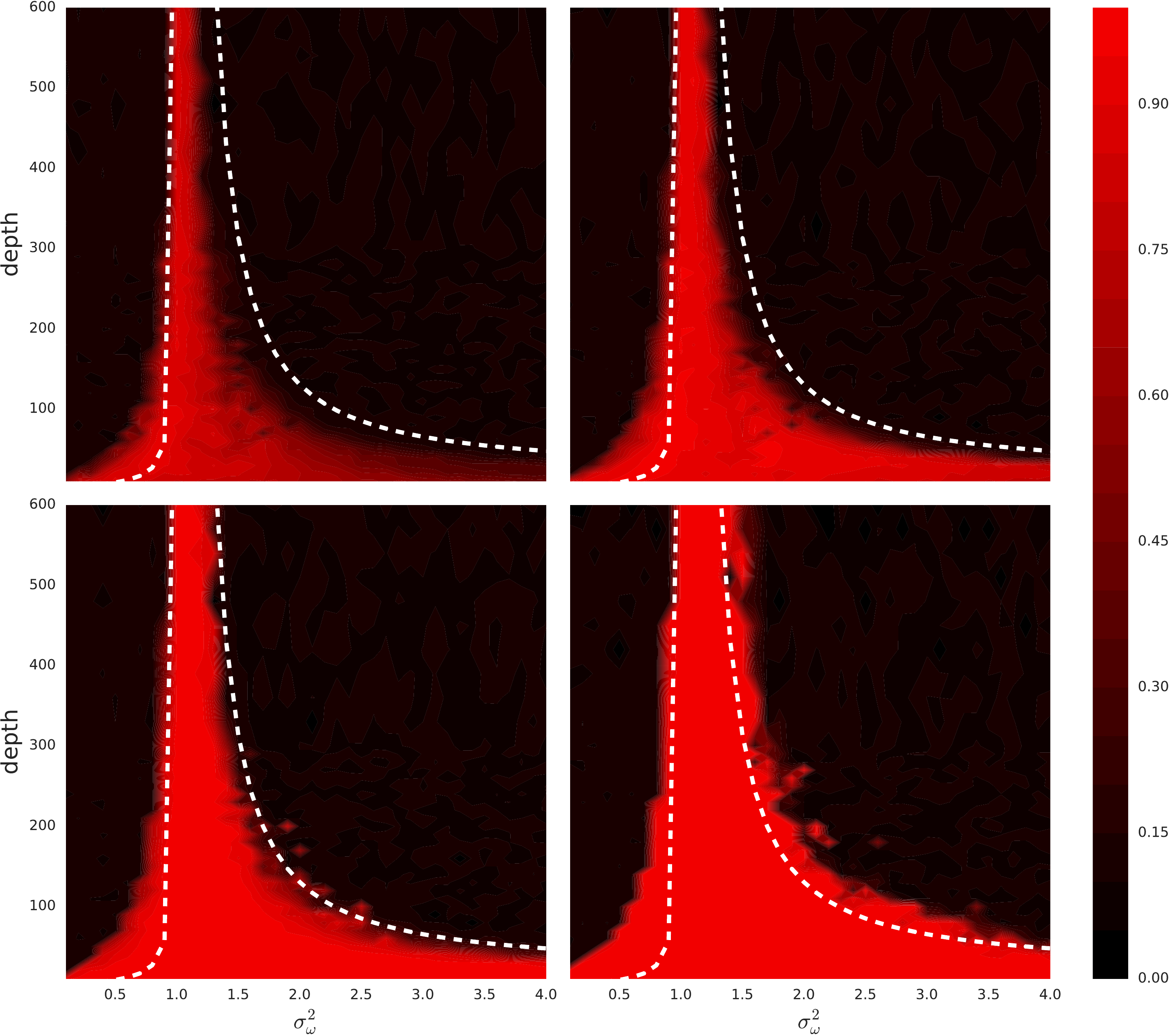}
      \caption{Mean field theory predicts the maximum trainable depth for CNNs. 
     For fixed bias variance $\sigma^2_b =2\times 10^{-5}$, the heat map shows the training accuracy on MNIST obtained for a given depth $L$ network and weight variance $\sigma_w$, after (a) $500$, (b) $2{,}500$, (c) $10{,}000$, and (d) $100{,}000$ training steps. Also plotted (white dashed line) is a multiple ($6\xi_c$) of the characteristic depth scale governing convergence to the fixed point.
     }
     \vspace{-0.3cm}
     \label{fig:heatmaps}
    \end{figure}

    \subsubsection{Depth scales of signal propagation}
    \label{sec:depth}
    We now assume that the conditions for a stable fixed point are met, i.e. $\chi_{q^*} < 1$ and $\chi_{c^*} < 1$, and we consider the rate at which the fixed point is approached. As in~\cite{schoenholz2016}, it is convenient to additionally assume $\chi_{q^*} < \chi_{c^*}$ so that the dynamics in the diagonal subspace can be neglected. In this case, eqn.~\eqref{eqn:dyn_linearized_freq} can be rewritten as
    \begin{equation}
        \label{eqn:depth_scale}
        \tilde{\epsilon}_{\alpha,\alpha'}^l \approx  e^{-(l-l_0)/\xi_{\alpha,\alpha'}} [\tilde{\epsilon}_{\text{o.d.}}]_{\alpha,\alpha'}^{l_0}\,,
    \end{equation}
    where $\xi_{\alpha,\alpha'} = -1/\log(\chi_{c^*}\lambda_{\alpha,\alpha'})$ are depth scales governing the convergence of the different modes. In particular, we expect signals corresponding to a specific Fourier mode $f_{\alpha,\alpha'}$ to be able to travel a depth commensurate to $\xi_{\alpha,\alpha'}$ through the network. Thus, unlike fully-connected networks which exhibit only a single depth scale, convolutional networks feature a hierarchy of depth scales.
    
    Recalling that $\lambda_{\alpha,n-\alpha} = 1$, it follows that $\xi_c\equiv \xi_{\alpha,n-\alpha} = -1/\log\chi_{c^*}$, which is identical to the depth scale governing signal propagation through fully-connected networks. It follows from~\cite{schoenholz2016} that when $\chi_1 = 1$, $\xi_{\alpha,n-\alpha}$ diverges and thus convolutional networks can propagate signals arbitrarily far through the $f_{\alpha,n-\alpha}$ modes. Since $|\lambda_{\alpha,\alpha'}| < 1$ for $\alpha' \neq n-\alpha$, these are the only modes through which signals can propagate without attenuation. 
    Finally, we note that the $f_{\alpha,n-\alpha}$ modes correspond to perturbations that are spatially uniform along the cyclic diagonals of the covariance matrix.
    The fact that all signals with additional spatial structure attenuate for large depth suggests that deep critical convolutional networks behave quite similarly to fully-connected networks, which also cannot propagate spatially-structured signals.

    \begin{figure}[t]
    \begin{center}
    \vspace{0.2cm}
    \centerline{\includegraphics[width=0.7\columnwidth]{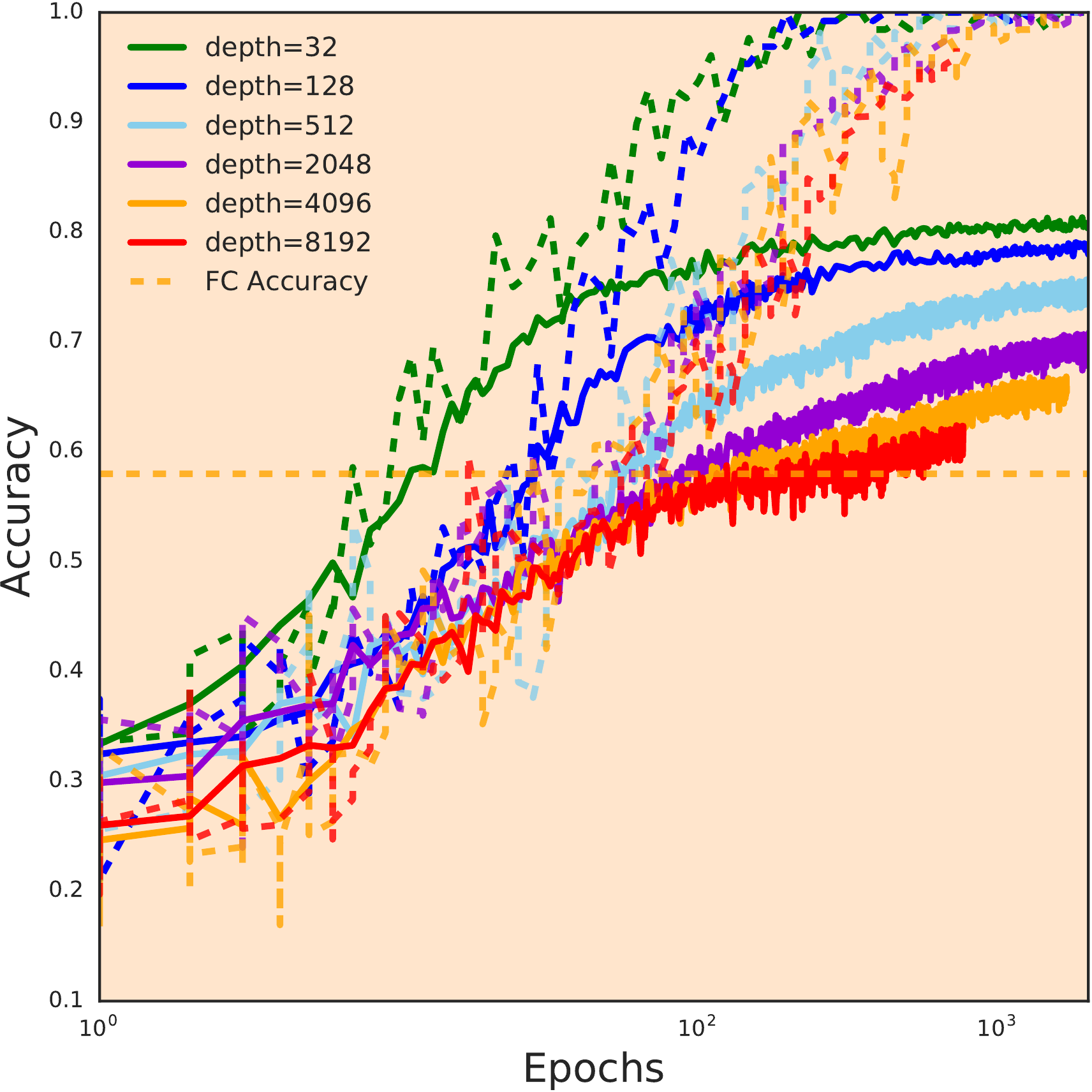}}
    \caption{
    Test (solid) and training (dashed) curves of CNNs with different depths initialized critically using orthogonal kernels on CIFAR-10. Training accuracy reaches $100\%$ for all these curves (except for 8192, which was stopped early) but generalization performance degrades with increasing depth, likely because of attenuation of spatially non-uniform modes. The Delta-Orthogonal initialization in Fig.~\ref{fig:cm10000} addresses this reduction in test performance with increasing depth. 
    }
    \label{fig:cifar10000}
    \end{center}
    \vspace{-0.75cm}
    \end{figure}

    \subsubsection{Non-uniform kernels}
    \label{sec:non-uniform}

    \begin{figure*}[t]
     \vspace{0.1cm}
     \includegraphics[width=.98\textwidth]{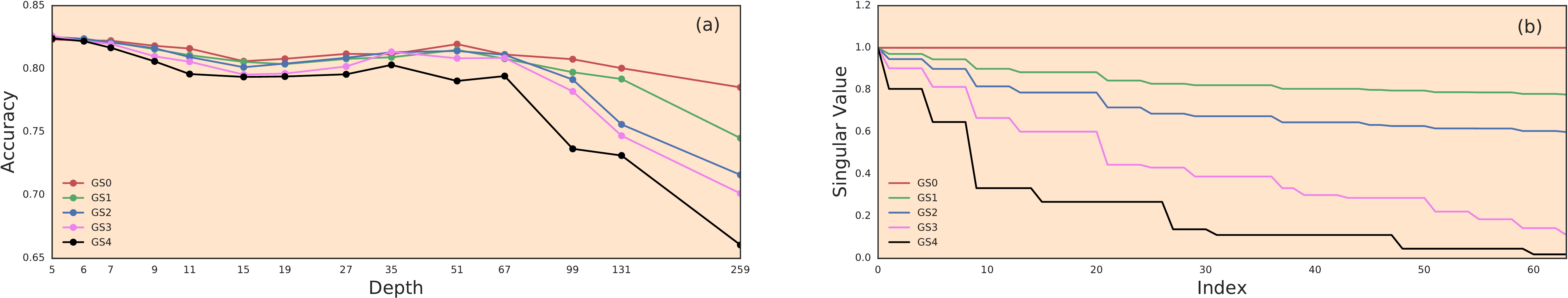}
     \vspace{-0.1cm}
     \caption{
     Test performance, as a function of depth, is correlated with the singular value distribution (SVD) of the generalized averaging operator $(\mathcal{A}_v\star)$ (see eqn.~(\ref{eqn:ACv-map})).
     (a) Initialized critically, we examine the test accuracy of CNNs with different depths and with Gaussian initialization of different non-uniform variance vectors. We ``deform" the variance vector from a delta function (red) to a uniformly distributed one (black). Starting from depth 35, we see the test accuracy curve also ``deforms'' from the red one to the black one. (b) The SVD of $(\mathcal{A}_v\star)$ for the selected variance vectors. The $x$-axis represents the index of a singular value, with a total of 64 singular values (each has 64 copies) for each variance vector. See Section~\ref{exp:multi} for details.
     }
     \vspace{-0.1in}
     \label{fig:depth}
    \end{figure*}
    
    The similarities between signal propagation in convolutional neural networks and fully-connected networks in the limit of large depth are surprising. A consequence may be that the performance of very deep convolutional networks degrades as the signal is forced to propagate along modes with minimal spatial structure. Indeed, Fig.~\ref{fig:cifar10000} shows that the generalization performance decreases with depth, and that for very large depth it barely surpasses the performance of a fully-connected network.
    
    If increased spatial uniformity is the problem, eqn.~\eqref{eqn:depth_scale} holds the solution. In order for \emph{all} modes to propagate without attenuation, it is necessary that $\lambda_{\alpha,\alpha'} = 1$ for all $\alpha, \alpha'$. In fact, it is easy to show that the distribution of $\{\lambda_{\alpha,\alpha'}\}$ can be modified by allowing for spatial non-uniformity in the variance of the weights within the kernel. To this end, we introduce a non-negative vector $v=(v_\beta)_{\beta\in \kn}$ chosen such that $\sum_\beta v_\beta = 1$, and initialize the weights of the network according to $w^l_{ij}(\beta)\sim\mathcal N(0, \sigma_w^2v_\beta/c)$. Each choice of $v$ will induce a new dynamical equation analogous to eqn.~\eqref{eqn:AC-map} (see SM),
    \begin{equation}\label{eqn:ACv-map}
        \bm\Sigma^{l+1} = \mathcal A_v \star \mathcal C (\bm\Sigma^l)\,, 
    \end{equation}
    where $\mathcal{A}_v = \diag(v).$
    It follows directly from the previous analysis that the linearized dynamics of eqn.~\eqref{eqn:ACv-map} will be identical to the dynamics of eqn~\eqref{eqn:AC-map}, only now with $\lambda_{\alpha,\alpha'} = \mathcal{F}(\mathcal{A}_v)_{\alpha,\alpha'}^*$. By the same argument presented in Section~\ref{sec:multi}, the set of eigenvalues is now comprised of $n$ copies of the 1D Fourier transform of $v$. As a result, it is possible to control the depth scales over which different modes of the signal can propagate through the network by changing the variance vector $v$. We will return to this point in section~\ref{theory:orthogonal2}.

\subsection{Back-propagation of signal}
We now turn our attention to the back-propgation of error signals through a convolutional network. Let $E$ denote the loss and $\delta^l_j(\alpha)$ the back-propagated signal at layer $l$, channel $j$ and spatial location $\alpha$, i.e., 
    \begin{equation}
    \delta^l_j(\alpha) = \frac {\partial E}{ \partial {h^l_j(\alpha)}}. 
    \end{equation}
    The recurrence relation is given by
    \begin{align*}
    \delta^l_j(\alpha)
    &= \sum_{i\in {\it chn}}\sum_{\beta\in{\it ker}}  \delta^{l+1}_{i}(\alpha -\beta)\omega_{ji}^{l+1}(\beta)\phi'(h_j^l(\alpha)). 
    \end{align*}
    As in~\cite{schoenholz2016}, we additionally make the assumption that the weights used during back-propagation are drawn independently from the weights used in forward propagation, in which case the random variables $\{\delta^l_j\}_{j\in \it chn}$ are independent for each $l$. The covariance matrices $\tilde {\bm \Sigma}^l \equiv \mathbb E\left[ \delta^{l}_j (\delta^{l}_j)^T \right]$ back-propagate according to,
    \begin{equation}
    \tilde {\bm \Sigma}^l_{\alpha,\alpha'} \!=\! 
    \sum_{\beta\in{\kn}} v_\beta
    \tilde {\bm \Sigma}^{l+1}_{\alpha -\beta, \alpha' - \beta} \cdot
    \sigma_w^2\mathbb E_{\bm h\sim\mathcal N(0,\bm \Sigma^l)}[\phi'(h_\alpha)\phi'(h_{\alpha'})]\,.\\
    \end{equation}
    We are primarily interested in the diagonal of $\tilde {\bm \Sigma}^l$, which measures the variance of back-propagated signals. We will also assume $l>l_0$ (see section~\ref{sec:multi}) so that ${\bm \Sigma}^l$ is well-approximated by ${\bm \Sigma}^*$. In this case,
    \begin{equation}
        \tilde {\bm \Sigma}^l_{\alpha,\alpha} \approx \chi_{1} \sum_{\beta\in{\kn}} v_\beta \tilde {\bm \Sigma}^{l+1}_{\alpha -\beta, \alpha - \beta}\,,
    \end{equation}
    where we used eqn.~(\ref{eqn:chi_c}). Therefore we find that, $\tilde {\bm \Sigma}^l_{\alpha,\alpha} \sim \chi_1^{L-l}\tilde {\bm \Sigma}^L_{\alpha,\alpha} $, where $L$ is the total depth of the network. As in the fully-connected case, $\chi_1 = 1$ is a necessary condition for gradient signals to neither explode nor vanish as they back-propagate through a convolutional network. However, as discussed in~\cite{pennington2017, PenningtonSG18}, this is not always a sufficient condition for trainability. To further understand backward signal propagation, we need to push our analysis beyond mean field theory.

\subsubsection{Beyond mean field theory}
\label{sec:beyond}
We have observed that the quantity $\chi_1$ is crucial for determining signal propagation in CNNs, both in the forward and backward directions. As discussed in~\cite{poole2016}, $\chi_1$ equals the  the mean squared singular value of the Jacobian $\bm J^l$ of the layer-to-layer transition operator. 
Beyond just the second moment, higher moments and indeed the whole distribution of singular values of the entire end-to-end Jacobian ${\bm J} = \prod_{l} {\bm J}^l$ are important for ensuring trainability of very deep fully-connected networks~\cite{pennington2017, PenningtonSG18}. Specifically, networks train well when their input-output Jacobians exhibit \emph{dynamical isometry}, namely the property that the entire distribution of singular values is close to $1$.

In fact, we can adopt the entire analysis of~\cite{pennington2017, PenningtonSG18} into the convolutional setting with essentially no modification. The reason stems from the fact that, because convolution is a linear operator, it has a matrix representation, $\bm W^l$, which appears in the end-to-end Jacobian in precisely the same manner as do the weight matrices in the fully-connected case. In particular, $\bm J = \prod_{l=1}^L {\bm D}^l {\bm W}^l$,
where ${\bm D^l}$ is the diagonal matrix whose diagonal elements contain the vectorized representation of derivatives of post-activation neurons in layer $l$. Roughly speaking, since this is the same expression as in~\cite{pennington2017, PenningtonSG18}, the conclusions found in that work regarding dynamical isometry apply equally well in the convolutional setting.

The analysis of \citet{pennington2017, PenningtonSG18} reveals that the singular values of $\bm J$ depends crucially on the distribution of singular values of $\bm W^l$ and $\bm D^l$. In particular, to achieve dynamical isometry, all of these matrices should be close to orthogonal. As in the fully-connected case, the singular values of $\bm D^l$ can be made arbitrarily close to $1$ by choosing a small value for $q^*$ and by using an activation function like $\tanh$ that is smooth and linear near the origin. In the convolutional setting, the matrix representation of the convolution operator $\bm W^l$ is a $c\times c$ block matrix with $n\times n$ circulant blocks. Note that in the large $c$ limit, $n/c \to 0$ and the relative size of the blocks vanishes. Therefore, if the weights are i.i.d. random variables, we can invoke universality results from random matrix theory to conclude its singular value distribution converges to the Marcenko-Pastur distribution; see Fig.~\ref{fig:SVD_weights} in the SM. As such, we find that CNNs with i.i.d. weights cannot achieve dynamical isometry. We address this issue in the next section.

\subsection{Orthogonal Initialization for CNNs}
\label{theory:orthogonal1}
In \cite{pennington2017, PenningtonSG18}, it was observed that dynamical isometry can lead to dramatic improvements in training speed, and that achieving these favorable conditions requires orthogonal weight initializations. While the procedure to generate random orthogonal weight matrices in the fully-connected setting is well-known, it is less obvious how to do so in the convolutional setting, and at first sight it is not at all clear whether it is even possible. We resolve this question by invoking a result from the wavelet literature~\cite{kautsky1994} and provide an explicit construction. We will focus on the {\it two-dimensional} convolution here and begin with some notation.  

\label{theory:orthogonal}
 \begin{figure}[t]
    \begin{center}
    \centerline{\includegraphics[width=0.6\columnwidth]{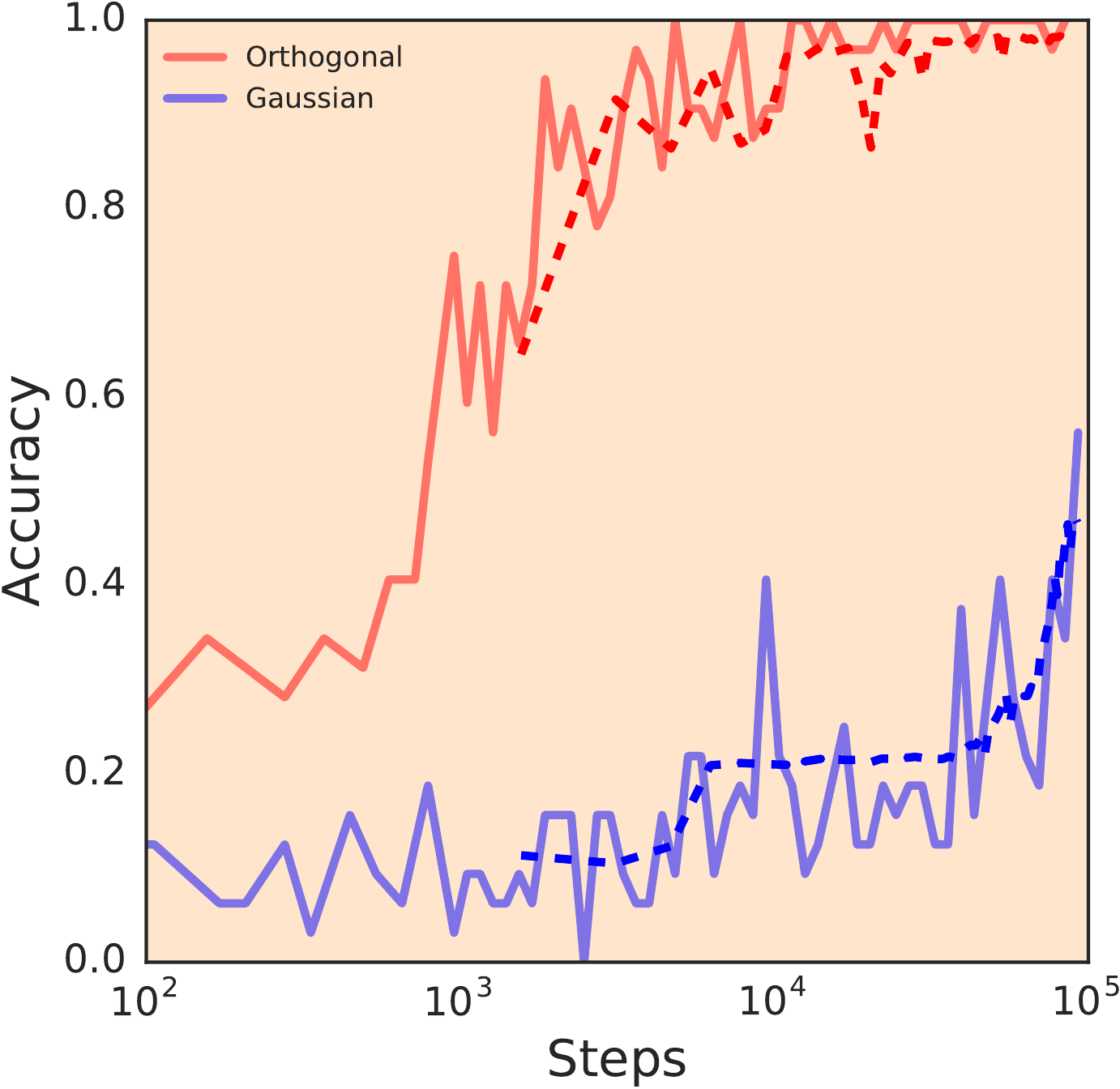}}
    \caption{Orthogonal initialization leads to faster training in CNNs. Training (solid lines) and test curves for a 4,000-layer CNN trained using orthogonal (red) and Gaussian (blue) initializations with identical weight variance. }
    \label{fig:d4000}
    \end{center}
    \vskip -0.2in
\end{figure}

\begin{definition}We say $K\in \mathbb R^{\Bbbk\times \Bbbk\times c_{in} \times c_{out}}$ is an orthogonal kernel if for all $x\in \mathbb R^{n\times n \times c_{in}}$, $\|K\ast x\|_2 = \|x\|_2$.
\end{definition}
    \begin{definition}
    Consider the block matrices $B=\{B_{i,j}\}_{0\leq i, j\leq p-1} \in \mathbb{R}^{pn\times pn}$ and $C=\{C_{i,j}\}_{0\leq i, j\leq q-1} \in \mathbb{R}^{qn\times qn}$, with constituent blocks $B_{i,j}\in\mathbb{R}^{n\times n}$ and $C_{i,j}\in\mathbb{R}^{n\times n}$. Define the block-wise convolution operator $\Box$ by,
    \begin{equation}
    \label{eqn:box}
    \left[B \Box C\right]_{i,j} = \sum_{i',j'} B_{i',j'} C_{i-i', j-j'},
    \end{equation}
    where the out-of-range matrices are taken to be zero.
    \end{definition}
 
    Algorithm~\ref{alg:orthogonal} shows how to construct orthogonal kernels for 2D convolutions of size $\Bbbk \times \Bbbk \times c_{in} \times c_{out}$ with $c_{in} \leq c_{out}$. One can employ the same method to construct kernels of higher (or lower) dimensions. This new initialization method can dramatically boost the learning speed of deep CNNs; see Fig.~\ref{fig:d4000} and Section~\ref{exp:orthogonal1}.

\subsection{Delta-Orthogonal Initialization} 
\label{theory:orthogonal2}

    In Section~\ref{sec:depth} it was observed that, in contrast to fully-connected networks, CNNs have multiple depth scales controlling propagation of signals along different Fourier modes. Even at criticality, for generic variance-averaging vectors $v$, the majority of these depth scales are finite. However, there does exist one special averaging vector for which all of the depth scales are infinite: a one-hot vector, i.e. $v_i = \delta_{k,i}$. 
    This kernel places all of its variance in the spatial center of the kernel and zero variance elsewhere. In this case, the eigenvalues $\lambda_{\alpha,\alpha'}$ are all equal to $1$ and all depth scales diverge, implying that signals can propagate arbitrarily far along all Fourier modes. 
    
    If we combine this special averaging vector with the orthogonal initialization of the previous section, we obtain a powerful new initialization scheme that we call Delta-Orthogonal Initialization. Matrices of this type can be generated from Algorithm~\ref{alg:orthogonal} with $\Bbbk=1$ and padding with appropriate zeros or directly from Algorithm~\ref{alg:delta} in the SM.
    
    In the following sections, we demonstrate experimentally that extraordinarily deep convolutional networks can be trained with these initialization techniques.

\begin{algorithm}[t!]
    \caption{2D orthogonal kernels for CNNs, available in TensorFlow via the $\operatorname{ConvolutionOrthogonal}$ initializer.}
    \label{alg:orthogonal}
    
    \begin{algorithmic}
    \STATE {\bfseries Input:} $\Bbbk$ kernel size, $c_{in}$ number of input channels, $c_{out}$ number of output channels.
    
    {\bf Return:} a $\Bbbk\times \Bbbk \times c_{in}\times c_{out}$ tensor $K$. 
    
       {\bf Step 1.} Let $K$ be the $1\times 1 \times c_{out}\times c_{out}$ tensor such that $K[0, 0]= I$, where $I$ is the $c_{out}\times c_{out}$ identity matrix. 
       
       {\bf Step 2.} Repeat the following $(\Bbbk-1)$ times: 
       
        {Randomly generate two orthogonal projection matrices $P$ and $Q$ of size $c_{out}\times c_{out}$ and set (see eqn.~(\ref{eqn:box})})
                $$
                K \leftarrow K \Box       \begin{bmatrix}
        PQ       & P (1- Q) 
        \\
        (1-P) Q  & (1-P) (1-Q)
      \end{bmatrix}.  
                $$
            {\bf Step 3.} Randomly generate a $c_{in}\times c_{out}$ matrix $H$ with orthonormal rows and for $i = 0,\dots, \Bbbk-1 $ and $j = 0, \dots, \Bbbk-1$, set $K[i,j] \leftarrow HK[i,j]$. 
            
            {\bf Return} $K$. 
    \end{algorithmic}
    \end{algorithm}

\section{Experiments}
    
To support the theoretical results built up in Section \ref{sec:theory}, we trained a large number of very deep CNNs on MNIST and CIFAR-10 with $\tanh$ as the activation function. We use the following {\it vanilla} CNN architecture. First we apply three $3\times3\times c$ convolutions with strides 1, 2 and 2 in order to increase the channel size to $c$ and reduce the spatial dimension to $7\times7$ (or $8\times 8$ for CIFAR-10), and then a block of $d$ $3\times3\times c$ convolutions with $d$ varying from $2$ to $10,000$. Finally, an average pooling layer and a fully-connected layer are applied. Here $c=256$ when $d\leq 256$ and $c=128$ otherwise. To maximally support our theories, we applied {\it no} common techniques (including learning rate decay). Note that the early downsampling is necessary from a computational perspective, but it does diminish the maximum achievable performance; e.g. our best achieved test accuracy with downsampling was 82$\%$ on CIFAR-10. We performed an additional experiment training a 50 layers network without downsampling. This resulted in a test accuracy of $89.90\%$, which is comparable to the best performance on CIFAR-10 using a $\tanh$ architecture that we were able to find ($89.82\%$, \cite{Mishkin2015}).

\subsection{Trainability and Critical Initialization}
\label{exp:critical}

    The analysis in Section \ref{theory:meanfield0} gives a prediction for precisely which initialization hyperparameters a CNN will be trainable. In particular, we predict that the network ought to be trainable provided $L\lesssim \xi_c$. To test this, we train a large number of convolutional neural networks on MNIST with depth varying between $L = 10$ and $L = 600$ and with weights initialized with $\sigma_w^2 \in [0, 4]$. In Fig.~\ref{fig:heatmaps} we plot -- using a heatmap -- the training accuracy obtained by these networks after different numbers of steps. Additionally we overlay the depth scale predicted by our theory, $\xi_c$. We find strikingly good agreement between our theory of random networks and the results of our experiments. 

\subsection{Orthogonal Initialization and Ultra-deep CNNs}
\label{exp:orthogonal1}
    We argued in Section~\ref{sec:beyond} that the input-output Jacobian of CNNs with i.i.d. weights will become increasingly ill-conditioned as the number of layers grows. On the other hand, orthogonal weight initializations can achieve dynamical isometry and dramatically boost the training speed. To verify this, we train a 4,000-layer CNN on MNIST using a critically-tuned Gaussian weight initialization and the orthogonal initialization scheme developed in Section~ \ref{theory:orthogonal1}. Fig.~\ref{fig:d4000} shows that the network with Gaussian initialization learns slowly (test and training accuracy is below $60\%$ after $90,000$ steps, about 60 epochs). In contrast, orthogonal initialization learns quickly with test accuracy above $60\%$ after only 1 epoch, and achieves $95\%$ after $10,000$ steps or about 7 epochs. 

\subsection{Multi-dimensional Signal Propagation}
    \label{exp:multi}
    
    The analysis in Section \ref{sec:multi} and Section \ref{sec:non-uniform} suggest that CNNs initialized with kernels with spatially uniform variance may suffer a degradation in generalization performance as the depth increases. Fig.~\ref{fig:cifar10000} shows the learning curves of CNNs on CIFAR-10 with depth varying from $32$ to $8192$. Although the orthogonal initialization enables even the deepest model to reach $100\%$ training accuracy, the test accuracy decays as the depth increases with the deepest mode generalizing only marginally better than a fully-connected network.
    
    To test whether this degradation in performance may be the result of attenuation of spatially non-uniform signals, we trained a variety of models on CIFAR-10 whose kernels were initialized with spatially non-uniform variance. According to the analysis in Section~\ref{sec:non-uniform}, changing the shape of this non-uniformity controls the depth scales over which different Fourier components of the signal can propagate through the network. We examined five different non-uniform critical Gaussian initialization methods. The variance vectors $v$ were chosen in the following way: GS0 refers to the one-hot delta initialization for which the eigenvalues $\lambda_{\alpha,\alpha'}$ are all equal to 1. GS1, GS2 and GS3 are obtained by interpolating between GS0 and GS4, which is the uniform variance initialization.
    
    Each variance vector has exactly $8\times 8$ singular values, plotted in Fig.~\ref{fig:depth}(b) in descending order. Note that from GS0 to GS4, the singular values become more poorly-conditioned (the distribution becomes more concentrated around 0). Fig.~\ref{fig:depth}(a) shows that the relative fall-off of generalization performance with depth follows the same pattern: the more poorly-conditioned the singular values the worse the model generalizes.  These observations suggest that salient information may be propagating along multiple Fourier modes.

\subsection{Training 10,000-layers: Delta-Orthogonal Initialization.}
\label{exp:orthogonal2}
    Our theory predicts that an ultra-deep CNNs can train faster and perform better if
    critically initialized using Delta-Orthogonal kernels. To test this theory, we train CNNs of 1,250, 2,500, 5,000 and 10,000 layers on both MNIST and CIFAR-10 (Fig.~\ref{fig:cm10000}).
    All these networks learn surprisingly quickly and, 
    remarkably, the learning time measured in number
    of training epochs is independent of depth. 
    Furthermore, our experimental results match well with the predicted benefits of this initialization: $99\%$ test accuracy on MNIST for a 10,000-layer network, and $82\%$ on CIFAR-10. 
    To isolate the benefits of the Delta-Orthogonal init, we also train a 2048-layer CNN (Fig.~\ref{fig:cifar10000}) using the spatially-uniform orthogonal initialization proposed in Section~\ref{theory:orthogonal}; the testing accuracy is about  $70\%$. Note that the test accuracy using (spatially uniform) Gaussian (non-orthogonal) initialization is already below $70\%$ when the depth is 259.

\section{Discussion}
In this work, we developed a theoretical framework based on mean field theory to study the propagation of signals in deep convolutional neural networks. By examining the necessary conditions for signals to flow both forward and backward through the network without attenuation, we derived an initialization scheme that facilitates training of vanilla CNNs of unprecedented depths. We presented an algorithm for the generation of random orthogonal convolutional kernels, an ingredient that is necessary to enable dynamical isometry, i.e. good conditioning of the network's input-output Jacobian. In contrast to the fully-connected case, signal propagation in CNNs is intrinsically multi-dimensional -- we showed how to decompose those signals into independent Fourier modes and how to promote uniform signal propagation across them. By leveraging these various theoretical insights, we demonstrated empirically that it is possible to train vanilla CNNs with 10,000 layers or more.

Our results indicate that we have removed all the major fundamental obstacles to training arbitrarily deep vanilla convolutional networks. In doing so, we have layed the groundwork to begin addressing some outstanding questions in the deep learning community, such as whether depth alone can deliver enhanced generalization performance. Our initial results suggest that past a certain depth, on the order of tens or hundreds of layers, the test performance for vanilla convolutional architecture saturates. These observations suggest that architectural features such as residual connections and batch normalization are likely to play an important role in defining a good model class, rather than simply enabling efficient training.

\section*{Acknowledgements}
We thank Xinyang Geng, Justin Gilmer, Alex Kurakin, Jaehoon Lee, Hoang Trieu Trinh, and Greg Yang for useful discussions and feedback. 

\bibliography{cnns}

\begin{thebibliography}{25}
\providecommand{\natexlab}[1]{#1}
\providecommand{\url}[1]{\texttt{#1}}
\expandafter\ifx\csname urlstyle\endcsname\relax
  \providecommand{\doi}[1]{doi: #1}\else
  \providecommand{\doi}{doi: \begingroup \urlstyle{rm}\Url}\fi

\bibitem[Collobert et~al.(2011)Collobert, Weston, Bottou, Karlen, Kavukcuoglu,
  and Kuksa]{collobert2011natural}
Collobert, R., Weston, J., Bottou, L., Karlen, M., Kavukcuoglu, K., and Kuksa,
  P.
\newblock Natural language processing (almost) from scratch.
\newblock \emph{Journal of Machine Learning Research}, 12\penalty0
  (Aug):\penalty0 2493--2537, 2011.

\bibitem[Daniely et~al.(2016)Daniely, Frostig, and Singer]{daniely2016}
Daniely, A., Frostig, R., and Singer, Y.
\newblock Toward deeper understanding of neural networks: The power of
  initialization and a dual view on expressivity.
\newblock In Lee, D.~D., Sugiyama, M., Luxburg, U.~V., Guyon, I., and Garnett,
  R. (eds.), \emph{Advances in Neural Information Processing Systems 29}, pp.\
  2253--2261. Curran Associates, Inc., 2016.

\bibitem[Deng et~al.(2009)Deng, Dong, Socher, Li, Li, and
  Fei-Fei]{imagenet_cvpr09}
Deng, J., Dong, W., Socher, R., Li, L.-J., Li, K., and Fei-Fei, L.
\newblock {ImageNet: A Large-Scale Hierarchical Image Database}.
\newblock In \emph{CVPR09}, 2009.

\bibitem[Hanin \& Rolnick(2018)Hanin and Rolnick]{hanin2018start}
Hanin, B. and Rolnick, D.
\newblock How to start training: The effect of initialization and architecture.
\newblock \emph{arXiv preprint arXiv:1803.01719}, 2018.

\bibitem[Hayou et~al.(2018)Hayou, Doucet, and Rousseau]{hayou2018selection}
Hayou, S., Doucet, A., and Rousseau, J.
\newblock On the selection of initialization and activation function for deep
  neural networks.
\newblock \emph{arXiv preprint arXiv:1805.08266}, 2018.

\bibitem[He et~al.(2016{\natexlab{a}})He, Zhang, Ren, and Sun]{he2016deep}
He, K., Zhang, X., Ren, S., and Sun, J.
\newblock Deep residual learning for image recognition.
\newblock In \emph{Proceedings of the IEEE conference on computer vision and
  pattern recognition}, pp.\  770--778, 2016{\natexlab{a}}.

\bibitem[He et~al.(2016{\natexlab{b}})He, Zhang, Ren, and Sun]{he2016identity}
He, K., Zhang, X., Ren, S., and Sun, J.
\newblock Identity mappings in deep residual networks.
\newblock In \emph{European Conference on Computer Vision}, pp.\  630--645.
  Springer, 2016{\natexlab{b}}.

\bibitem[Hinton et~al.(2012)Hinton, Deng, Yu, Dahl, Mohamed, Jaitly, Senior,
  Vanhoucke, Nguyen, Sainath, et~al.]{hinton2012speech}
Hinton, G., Deng, L., Yu, D., Dahl, G.~E., Mohamed, A.-r., Jaitly, N., Senior,
  A., Vanhoucke, V., Nguyen, P., Sainath, T.~N., et~al.
\newblock Deep neural networks for acoustic modeling in speech recognition: The
  shared views of four research groups.
\newblock \emph{IEEE Signal Processing Magazine}, 29\penalty0 (6):\penalty0
  82--97, 2012.

\bibitem[Ioffe \& Szegedy(2015)Ioffe and Szegedy]{ioffe2015batch}
Ioffe, S. and Szegedy, C.
\newblock Batch normalization: Accelerating deep network training by reducing
  internal covariate shift.
\newblock In \emph{International Conference on Machine Learning}, pp.\
  448--456, 2015.

\bibitem[Kalchbrenner et~al.(2014)Kalchbrenner, Grefenstette, and
  Blunsom]{kalchbrenner2014convolutional}
Kalchbrenner, N., Grefenstette, E., and Blunsom, P.
\newblock A convolutional neural network for modelling sentences.
\newblock \emph{arXiv preprint arXiv:1404.2188}, 2014.

\bibitem[{Karakida} et~al.(2018){Karakida}, {Akaho}, and {Amari}]{Karakida2018}
{Karakida}, R., {Akaho}, S., and {Amari}, S.-i.
\newblock {Universal Statistics of Fisher Information in Deep Neural Networks:
  Mean Field Approach}.
\newblock \emph{ArXiv e-prints}, June 2018.

\bibitem[Kautsky \& Turcajová(1994)Kautsky and Turcajová]{kautsky1994}
Kautsky, J. and Turcajová, R.
\newblock A matrix approach to discrete wavelets.
\newblock In Chui, C.~K., Montefusco, L., and Puccio, L. (eds.),
  \emph{Wavelets: Theory, Algorithms, and Applications}, volume~5 of
  \emph{Wavelet Analysis and Its Applications}, pp.\  117 -- 135. Academic
  Press, 1994.

\bibitem[Kim(2014)]{kim2014convolutional}
Kim, Y.
\newblock Convolutional neural networks for sentence classification.
\newblock \emph{arXiv preprint arXiv:1408.5882}, 2014.

\bibitem[Krizhevsky et~al.(2012)Krizhevsky, Sutskever, and
  Hinton]{krizhevsky2012}
Krizhevsky, A., Sutskever, I., and Hinton, G.~E.
\newblock Imagenet classification with deep convolutional neural networks.
\newblock In \emph{Advances in neural information processing systems}, pp.\
  1097--1105, 2012.

\bibitem[Mishkin \& Matas(2015)Mishkin and Matas]{Mishkin2015}
Mishkin, D. and Matas, J.
\newblock All you need is a good init.
\newblock \emph{CoRR}, abs/1511.06422, 2015.
\newblock URL \url{http://arxiv.org/abs/1511.06422}.

\bibitem[Pennington et~al.(2017)Pennington, Schoenholz, and
  Ganguli]{pennington2017}
Pennington, J., Schoenholz, S., and Ganguli, S.
\newblock Resurrecting the sigmoid in deep learning through dynamical isometry:
  theory and practice.
\newblock In Guyon, I., Luxburg, U.~V., Bengio, S., Wallach, H., Fergus, R.,
  Vishwanathan, S., and Garnett, R. (eds.), \emph{Advances in Neural
  Information Processing Systems 30}, pp.\  4788--4798. Curran Associates,
  Inc., 2017.

\bibitem[Pennington et~al.(2018)Pennington, Schoenholz, and
  Ganguli]{PenningtonSG18}
Pennington, J., Schoenholz, S.~S., and Ganguli, S.
\newblock The emergence of spectral universality in deep networks.
\newblock In \emph{International Conference on Artificial Intelligence and
  Statistics, {AISTATS} 2018, 9-11 April 2018, Playa Blanca, Lanzarote, Canary
  Islands, Spain}, pp.\  1924--1932, 2018.
\newblock URL \url{http://proceedings.mlr.press/v84/pennington18a.html}.

\bibitem[{Poole} et~al.(2016){Poole}, {Lahiri}, {Raghu}, {Sohl-Dickstein}, and
  {Ganguli}]{poole2016}
{Poole}, B., {Lahiri}, S., {Raghu}, M., {Sohl-Dickstein}, J., and {Ganguli}, S.
\newblock {Exponential expressivity in deep neural networks through transient
  chaos}.
\newblock \emph{NIPS}, 2016.

\bibitem[Saxe et~al.(2013)Saxe, McClelland, and Ganguli]{saxe2013exact}
Saxe, A.~M., McClelland, J.~L., and Ganguli, S.
\newblock Exact solutions to the nonlinear dynamics of learning in deep linear
  neural networks.
\newblock \emph{arXiv preprint arXiv:1312.6120}, 2013.

\bibitem[{Schoenholz} et~al.(2017){Schoenholz}, {Gilmer}, {Ganguli}, and
  {Sohl-Dickstein}]{schoenholz2016}
{Schoenholz}, S.~S., {Gilmer}, J., {Ganguli}, S., and {Sohl-Dickstein}, J.
\newblock {Deep Information Propagation}.
\newblock \emph{ICLR}, 2017.

\bibitem[Schoenholz et~al.(2017)Schoenholz, Pennington, and
  Sohl-Dickstein]{schoenholz2017correspondence}
Schoenholz, S.~S., Pennington, J., and Sohl-Dickstein, J.
\newblock A correspondence between random neural networks and statistical field
  theory.
\newblock \emph{arXiv preprint arXiv:1710.06570}, 2017.

\bibitem[Silver et~al.(2016)Silver, Huang, Maddison, Guez, Sifre, van~den
  Driessche, Schrittwieser, Antonoglou, Panneershelvam, Lanctot, Dieleman,
  Grewe, Nham, Kalchbrenner, Sutskever, Lillicrap, Leach, Kavukcuoglu, Graepel,
  and Hassabis]{silver2016}
Silver, D., Huang, A., Maddison, C.~J., Guez, A., Sifre, L., van~den Driessche,
  G., Schrittwieser, J., Antonoglou, I., Panneershelvam, V., Lanctot, M.,
  Dieleman, S., Grewe, D., Nham, J., Kalchbrenner, N., Sutskever, I.,
  Lillicrap, T., Leach, M., Kavukcuoglu, K., Graepel, T., and Hassabis, D.
\newblock Mastering the game of go with deep neural networks and tree search.
\newblock \emph{Nature}, 529\penalty0 (7587):\penalty0 484--489, 01 2016.

\bibitem[Silver et~al.(2017)Silver, Schrittwieser, Simonyan, Antonoglou, Huang,
  Guez, Hubert, Baker, Lai, Bolton, et~al.]{silver2017mastering}
Silver, D., Schrittwieser, J., Simonyan, K., Antonoglou, I., Huang, A., Guez,
  A., Hubert, T., Baker, L., Lai, M., Bolton, A., et~al.
\newblock Mastering the game of go without human knowledge.
\newblock \emph{Nature}, 550\penalty0 (7676):\penalty0 354--359, 2017.

\bibitem[Yang \& Schoenholz(2017)Yang and Schoenholz]{yang2017}
Yang, G. and Schoenholz, S.
\newblock Mean field residual networks: On the edge of chaos.
\newblock In Guyon, I., Luxburg, U.~V., Bengio, S., Wallach, H., Fergus, R.,
  Vishwanathan, S., and Garnett, R. (eds.), \emph{Advances in Neural
  Information Processing Systems 30}, pp.\  2865--2873. Curran Associates,
  Inc., 2017.

\bibitem[Yang \& Schoenholz(2018)Yang and Schoenholz]{yang2018deep}
Yang, G. and Schoenholz, S.~S.
\newblock Deep mean field theory: Layerwise variance and width variation as
  methods to control gradient explosion, 2018.
\newblock URL \url{https://openreview.net/forum?id=rJGY8GbR-}.

\end{thebibliography}
\bibliographystyle{icml2018}

\normalsize
\onecolumn
\clearpage
\appendix

\begin{center}
\textbf{\large Supplemental Material}
\end{center}
\setcounter{equation}{0}
\setcounter{figure}{0}
\setcounter{table}{0}
\setcounter{page}{1}
\setcounter{section}{0}
\makeatletter
\renewcommand{\theequation}{S\arabic{equation}}
\renewcommand{\thefigure}{S\arabic{figure}}
\renewcommand{\bibnumfmt}[1]{[S#1]}
\renewcommand{\citenumfont}[1]{S#1}

\section{Discussion of Mean Field Theory}
Consider an $L$-layer 1D\footnote{For notational simplicity, as in the main text, we again consider 1D convolutions, but the 2D case proceeds identically.} periodic CNN with filter size $2k+1$, channel size $c$, spatial size $n$, per-layer weight tensors $\omega\in \mathbb{R}^{(2k+1) \times c \times c}$ and biases $b \in \mathbb{R}^{c}$. Let $\phi:\mathbb R \to \mathbb R$ be the activation function and let $h^l_j(\alpha)$ denote the pre-activation at layer $l$, channel $j$, and spatial location $\alpha$. 
Suppose the weights $\omega_{ij}^{l}$ are drawn i.i.d. from the Gaussian $\mathcal N(0, \sigma_\omega^2/(c(2k+1)))$ and the biases $b^{l}_j$ are drawn i.i.d. from the Gaussian $ \mathcal N(0, \sigma_b^2)$.
The forward-propagation dynamics can be described by the recurrence relation,

\begin{equation*}
h^{l+1}_j(\alpha) = \sum_{i\in\chn}\sum_{\beta\in\kn} x^{l}_i(\alpha+\beta) \omega_{ij}^{l+1}(\beta)+b_j^{l+1}\,,\quad  x^l_i(\alpha)  = \phi(h^l_i(\alpha) ).
\end{equation*}
For $l\geq 0$, note that 
(a) $\{h^{l+1}_j\}_j$ are i.i.d. random variables and (b) for each $j$,  $h^{l+1}_j$ is a sum of $c$ i.i.d. random variables with mean zero. The central limit theorem implies that $\{h^{l+1}_j\}_j$ are i.i.d. Gaussian random variables.  
Let $\bm \Sigma^{l+1} =\{\Sigma^{l+1}_{\alpha, \alpha'}\}_{\alpha,\alpha'}$ denote the covariance matrix, where 
$$\Sigma^{l+1}_{\alpha, \alpha'} =\mathbb E\left[ h^{l+1}_j(\alpha) h^{l+1}_j(\alpha') \right],$$
where the expectation is taken over all random variables in and before layer $(l+1)$. Therefore, we have the following lemma.
\begin{lemma} 
As $c\to\infty$, for each $l\geq 0$, $h^{l+1}_j$ is a mean zero Gaussian with covariance matrix $\bm \Sigma^{l+1}$ satisfying the recurrence relation,
\begin{equation}
\bm \Sigma^{l+1} = \mathcal A \star \mathcal C ( \bm\Sigma^l )\,.
\end{equation}
\end{lemma}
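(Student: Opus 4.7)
The plan is to argue by induction on $l$, combining conditional Gaussianity (the weights and biases at layer $l+1$ are Gaussian, so conditional on $h^l$ the next layer is a linear combination of Gaussians) with a law-of-large-numbers concentration of the conditional covariance as $c\to\infty$. The inductive hypothesis will assert that in the limit the channels $\{h^l_j\}_{j\in\chn}$ are i.i.d. mean-zero Gaussian across $j$ with covariance $\bm\Sigma^l$; the goal is to establish the same at layer $l+1$ with covariance $\mathcal{A}\star\mathcal{C}(\bm\Sigma^l)$, both verifying Gaussianity and the recurrence.

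For the base case $l=0$, for any fixed input $h^0$ the quantity $h^1_j(\alpha)$ is an exact linear combination of i.i.d.\ Gaussian weights $\omega^1_{ij}(\beta)$ and an independent Gaussian bias $b^1_j$, hence jointly Gaussian across $\alpha$ and independent across $j$; its covariance can be computed directly. For the inductive step, I would first observe that conditional on $\{h^l_i\}_i$, the random variables $h^{l+1}_j(\alpha)$ remain jointly Gaussian, independent across $j$, with conditional covariance
\begin{equation*}
\mathrm{Cov}\bigl(h^{l+1}_j(\alpha),h^{l+1}_{j'}(\alpha')\,\big|\,h^l\bigr) = \delta_{jj'}\left[\sigma_b^2 + \frac{\sigma_w^2}{c(2k+1)}\sum_{i\in\chn,\,\beta\in\kn}\phi\bigl(h^l_i(\alpha+\beta)\bigr)\phi\bigl(h^l_i(\alpha'+\beta)\bigr)\right].
\end{equation*}
By the inductive hypothesis, the $i$-indexed summands are i.i.d., so the empirical averages $\tfrac{1}{c}\sum_i\phi(h^l_i(\alpha+\beta))\phi(h^l_i(\alpha'+\beta))$ concentrate (law of large numbers) on the deterministic limit $\mathbb{E}[\phi(h^l_1(\alpha+\beta))\phi(h^l_1(\alpha'+\beta))]$. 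Hence the conditional covariance converges in probability to $[\mathcal{A}\star\mathcal{C}(\bm\Sigma^l)]_{\alpha,\alpha'}\delta_{jj'}$. A clean way to pass from conditional to unconditional Gaussianity is via characteristic functions: for any test vector $t$,
\begin{equation*}
\mathbb{E}\!\left[e^{i\,t^{T}h^{l+1}_{j}}\,\big|\,h^l\right] = \exp\!\left(-\tfrac12\, t^{T}\Sigma_c^{(j)}\,t\right),
\end{equation*}
with $\Sigma_c^{(j)}$ the above conditional covariance. Taking expectations and invoking bounded convergence on the right-hand side yields the target Gaussian characteristic function, which establishes both Gaussianity and the covariance recurrence. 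Independence across $j$ also survives the limit because the conditional joint characteristic function factorizes over $j$.

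The main obstacle is making the chained limit rigorous and uniform: one must propagate i.i.d.\ Gaussianity across channels layer by layer while the channel count $c$ is sent to infinity simultaneously at all depths. This requires (i) enough moment control on $\phi(h^l_i)$ to justify the LLN-style concentration at each layer, which is trivial for bounded activations like $\tanh$ but would need attention for unbounded $\phi$; and (ii) showing that the inductive hypothesis of exact i.i.d.\ Gaussianity across channels is preserved in the limit despite the fact that at finite $c$ the channels $\{h^{l+1}_j\}_j$ are only \emph{conditionally} independent given $h^l$ and the conditional covariance still has $O(1/\sqrt{c})$ fluctuations. A careful formulation therefore works with weak convergence of finite-dimensional marginals (over any finite set of channels and spatial locations), with the characteristic-function argument above providing the main technical step.
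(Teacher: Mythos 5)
Your proposal is correct and shares the paper's overall skeleton: induction over layers, with the covariance recursion obtained by integrating out the layer-$(l+1)$ weights and biases and using the (inductively assumed) i.i.d.\ Gaussianity of the channels $\{h^l_i\}_i$ to replace the empirical channel average $\tfrac1c\sum_i \phi(h^l_i(\alpha+\beta))\phi(h^l_i(\alpha'+\beta))$ by its expectation, which is exactly how the paper arrives at $\bm\Sigma^{l+1}=\mathcal A\star\mathcal C(\bm\Sigma^l)$. Where you genuinely diverge is in how Gaussianity is established. The paper views $h^{l+1}_j(\alpha)$ as a sum of $c$ i.i.d.\ mean-zero terms indexed by the incoming channel $i$ and invokes the central limit theorem (and dismisses the propagation of i.i.d.\ Gaussianity with ``it is not difficult to see''). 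You instead exploit the fact that, conditional on $h^l$, the pre-activations are \emph{exactly} jointly Gaussian and independent across $j$ because the weights themselves are Gaussian; the only thing that needs to converge is the random conditional covariance, which you handle by a law of large numbers, and you transfer the result to the unconditional law via characteristic functions and bounded convergence. This conditional-Gaussianity route is cleaner and more honest: it avoids applying the CLT to summands that are only i.i.d.\ after already assuming the inductive hypothesis, it makes explicit that independence across $j$ survives the limit because the conditional characteristic function factorizes, and it isolates the one real technical issue --- the interchange of the $c\to\infty$ limit with the layer recursion and the moment control needed for unbounded $\phi$ --- which the paper's sketch passes over silently. The paper's version buys brevity; yours buys a statement of weak convergence of finite-dimensional marginals that could actually be made rigorous. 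No gaps; if anything, your proof is more complete than the one in the supplement.
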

\begin{proof}
Let $\theta^l = [W^l, b^l]$ and $\theta^{0:l} = [ \theta^0,\dots, \theta^l]$. Then,

\begin{align}\label{eq_cov}
\Sigma^{l+1}_{\alpha, \alpha'} &= \mathbb E_{\theta^{0:l}}\, \mathbb E_{\theta^{l+1}}\left[ h^{l+1}_j(\alpha)h^{l+1}_j(\alpha')  \right]
\\
&=  
\mathbb E_{\theta^{0:l}}\, \left[  \frac{\sigma^2_\omega}{2k+1}\sum_{\beta\in\kn} \frac 1 {c}\sum_{i=1}^c (x^{l}_i(\alpha+\beta)x^{l}_i(\alpha'+\beta)) + \sigma_b^2\right]\,,
\end{align}
where we used the fact that,
\begin{align*}
\mathbb E_{\theta^{l+1}}\, \left[\omega_{ij}^{l+1}(\beta) \omega_{i'j'}^{l+1}(\beta')\right] = 
\begin{cases}
\frac{\sigma_\omega^2}{c(2k+1)}, \quad &{\rm if }\quad (i, j, \beta) = (i', j', \beta')
\\
\\
0, \quad & {\rm otherwise}. 
\end{cases}
\end{align*} 
Note that $\bm \Sigma^{1}$ can be computed once $h^0$ (or $x^0$) is given.
We will proceed by induction. Let $l\geq 1$ be fixed and assume $\{h_j^l\}_j$ are i.i.d. mean zero Gaussian with covariance $\bm \Sigma^l$. 
It is not difficult to see that $\{h_j^{l+1}\}_j$ are also i.i.d. mean zero Gaussian as $c\to \infty$. To compute the covariance, 
note that for any fixed pair $(\alpha, \alpha')$, $\{x_i(\alpha)x_i(\alpha')\}_{i}$ are i.i.d. random variables. Then,  
\begin{align}\label{eq_lim}
\mathbb E_{\theta^{0:l}} \,\left[ \frac 1 {c}\sum_{i=1}^c (x^{l}_i(\alpha)x^{l}_i(\alpha')) \right] = 
\mathbb E_{\theta^{0:l}}\,\left[x^{l}_i(\alpha)x^{l}_i(\alpha') \right] 
= \mathbb E_{\theta^{0:l}} \,\left[\phi(h^{l}_i(\alpha)) \phi(h^{l}_i(\alpha')) \right] \,.
\end{align}
Thus by eq.~\eqref{eq:cmap},  eq.~\eqref{eq_cov} can be written as,
\begin{align}\label{eq_cov2}
\Sigma^{l+1}_{\alpha, \alpha'}
= \frac{1}{2k+1}\sum_{\beta\in\kn} \mathcal  [ C(\bm \Sigma^l)]_{\alpha+\beta, \alpha'+\beta} \,,
\end{align}
so that,
\begin{equation}
        \bm \Sigma^{l+1} \equiv \mathcal A \star \mathcal C (\bm\Sigma^{l})\,.
\end{equation}

\end{proof} 
The same proof yields the following corollary. 
\begin{corollary}
Let $v = (v_\beta)_{\beta \in{\it ker}}$ be a sequence of non-negative numbers with $\sum_{\beta\in {\it ker}} v_\beta =1$. Let $\mathcal A_v$ be the cross-correlation operator induced by $v$, i.e.,
\begin{equation}
(\mathcal A_v \star f )_{\alpha, \alpha'} = \sum_{\beta\in {\kn}} v_\beta f_{\alpha+\beta, \alpha'+\beta}\,.
\end{equation}
Suppose the weights $\omega_{i,j}^l(\beta)$ are drawn i.i.d. from the Gaussian $\mathcal N(0, \frac{v_\beta} c \cdot \sigma_\omega^2)$. Then the recurrence relation for the covariance matrix is given by,
\begin{equation}
\bm \Sigma^{l+1} = \mathcal A_v\star \mathcal C (\bm\Sigma^l )\,.
\end{equation}
\end{corollary}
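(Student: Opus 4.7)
The plan is to mirror the proof of the preceding lemma verbatim, the only substantive change being bookkeeping of the per-location weight variance. I would proceed by induction on $l$, with the base case $l=0$ trivial since $h^0$ is given and $h^1$ is a sum of $c$ independent Gaussian contributions (plus the Gaussian bias). The induction hypothesis is that $\{h^l_j\}_j$ are i.i.d.\ mean-zero Gaussians with covariance matrix $\bm\Sigma^l$, and the step requires showing the same for layer $l+1$ with covariance $\mathcal{A}_v\star\mathcal{C}(\bm\Sigma^l)$.

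The key observation is that the recurrence defining $h^{l+1}_j(\alpha)$ is still a sum of $c(2k+1)$ independent products of the form $x^l_i(\alpha+\beta)\omega^{l+1}_{ij}(\beta)$, which is asymptotically (as $c\to\infty$) Gaussian by the central limit theorem; independence across channels $j$ is inherited from independence of the weights $\omega^{l+1}_{ij}$ in $j$. Hence only the covariance computation needs to be redone. The single place the new distribution enters is the second moment
\begin{equation*}
\mathbb{E}\bigl[\omega_{ij}^{l+1}(\beta)\,\omega_{i'j'}^{l+1}(\beta')\bigr] = \frac{v_\beta\,\sigma_\omega^2}{c}\,\delta_{ii'}\delta_{jj'}\delta_{\beta\beta'}.
\end{equation*}
Substituting this into the analogue of eq.~(S2) replaces the uniform factor $\frac{1}{2k+1}$ attached to each $\beta$-summand by the per-location weight $v_\beta$, yielding
\begin{equation*}
\Sigma^{l+1}_{\alpha,\alpha'} = \sigma_\omega^2\sum_{\beta\in\kn} v_\beta\,\mathbb{E}_{\theta^{0:l}}\!\left[\frac{1}{c}\sum_{i=1}^c \phi(h^l_i(\alpha+\beta))\phi(h^l_i(\alpha'+\beta))\right] + \sigma_b^2.
\end{equation*}

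Taking $c\to\infty$ and applying the law of large numbers to the inner average (as in eq.~(S3) of the lemma, justified by the induction hypothesis that the $\{h^l_i\}_i$ are i.i.d.) turns the empirical average into the population expectation, and by the definition of $\mathcal{C}$ in eq.~\eqref{eq:cmap} this collapses to $\sum_{\beta} v_\beta[\mathcal{C}(\bm\Sigma^l)]_{\alpha+\beta,\alpha'+\beta}$, which is precisely $[\mathcal{A}_v\star\mathcal{C}(\bm\Sigma^l)]_{\alpha,\alpha'}$ by the definition given in the statement.

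There is no real obstacle: the normalization assumption $\sum_\beta v_\beta = 1$ guarantees that total weight variance per output unit is preserved at $\sigma_\omega^2$, so integrability and the CLT application go through exactly as before. The only bookkeeping worth flagging is the verification that the new variance scaling leaves the cross-terms in $\mathbb{E}[\omega_{ij}(\beta)\omega_{i'j'}(\beta')]$ vanishing under independence, which is immediate from the i.i.d.\ assumption. Thus the proof reduces to quoting the lemma's argument with $\frac{1}{2k+1}$ replaced by $v_\beta$ throughout.
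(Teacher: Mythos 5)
Your proposal is correct and is exactly the argument the paper intends: the paper's entire proof of this corollary is the remark that ``the same proof yields the following corollary,'' i.e.\ repeat the lemma's induction with the weight second moment $\frac{\sigma_\omega^2}{c(2k+1)}\delta_{ii'}\delta_{jj'}\delta_{\beta\beta'}$ replaced by $\frac{v_\beta\sigma_\omega^2}{c}\delta_{ii'}\delta_{jj'}\delta_{\beta\beta'}$, so that the factor $\frac{1}{2k+1}$ on each $\beta$-summand becomes $v_\beta$ and the averaging operator $\mathcal A$ becomes $\mathcal A_v$. Your bookkeeping of where the new variance enters and why the CLT and law-of-large-numbers steps are unaffected matches the paper's reasoning.
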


\subsection{Back-propagation}
Let $E$ denote the loss associated to a CNN and $\delta^l_j(\alpha)$ denote a backprop signal given by,
$$
\delta^l_j(\alpha) = \frac {\partial E}{ \partial {h^l_j(\alpha)}}. 
$$
The layer-to-layer recurrence relation is given by,
\begin{align*}
\delta^l_j(\alpha) &= \sum_{i\in {\it chn}}\sum_{\alpha'\in{\it sp}} \frac {\partial E}{ \partial {h^{l+1}_i(\alpha')}}\frac{ \partial {h^{l+1}_i(\alpha')}}{ \partial {h^l_j(\alpha)}}
\\
&= \sum_{i\in {\it chn}}\sum_{\beta\in{\it ker}}  \delta^{l+1}_{i}(\alpha -\beta)\omega_{ji}^{l+1}(\beta)\phi'(h_j^l(\alpha)) \,.
\end{align*}
We need to make an assumption that the weights used during back-propagation are drawn independently from the weights used in forward propagation. This implies $\{\delta^l_j\}_{j\in \it chn}$ are independent for all $l$ and for $j\neq j'$,
\begin{align*}
\mathbb E\,[\delta^l_j(\alpha)\delta^l_{j'}(\alpha')] = 0\,,  
\end{align*}
and 
\begin{align*}
\bm\Sigma_{\alpha, \alpha'} &=
\mathbb E\,[\delta^l_j(\alpha)\delta^l_j(\alpha')] 
\\
&= \sum_{i\in {\it chn}}\sum_{\beta\in{\it ker}}      
\mathbb E\,[\delta^{l+1}_i(\alpha -\beta)\delta^{l+1}_i(\alpha' -\beta)]
\mathbb E\,[\phi'(h_j^l(\alpha)\phi'(h_j^l(\alpha')]
\mathbb E\,[\omega_{j,i}^{l+1}(\beta)\omega_{j,i}^{l+1}(\beta)]
\\
& = 
\left(\frac {1} {2k+1}\sum_{\beta\in{\it ker}} 
\bm\Sigma_{\alpha -\beta, \alpha' - \beta}\right) 
\left(\sigma_\omega^2 \mathbb E \, [\phi'(h_j^l(\alpha)\phi'(h_j^l(\alpha')) ]
\right).
\end{align*}
For large $l$, the second parenthesized term can be approximated by $\chi_1$ if $\alpha' = \alpha$ and by $\chi_{c^*}$ otherwise. 

\section{The Jacobian of the {$\mathcal C$-map}}
Recall that 
$\mathcal C:\text{PSD}_n\to\text{PSD}_n$ is given by,
    \begin{equation}\label{eq:cmap1}
    [\mathcal C(\bm \Sigma)]_{\alpha, \alpha'} = \sigma_\omega^2\,\mathbb E_{\bm h \sim\mathcal N(0,\bm \Sigma)} \left[ \phi(h_\alpha)\phi(h_{\alpha'}) \right] +\sigma_b^2\,.
    \end{equation}
We are interested in the linearized dynamics of $\mathcal C$ near the fixed point $\bm\Sigma^*$. Let $J: \mathbb R^{n\times n} \to \mathbb R^{n\times n}$ denote the Jacobian of $\mathcal C$ at $\bm\Sigma^*$. The main result of this section is that $J$ commutes with any diagonal convolution operator. 
\begin{theorem}\label{theorem:commute}
Let $J$ be as above and $\mathcal A$ be any $n\times n$ diagonal matrix and $U$ be any $n\times n$ symmetric matrix. Then,
\begin{equation}
\mathcal A\star J(U) = J (\mathcal A\star U)\,.
\end{equation}
\end{theorem}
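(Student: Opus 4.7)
The plan is to exploit the translation invariance of the fixed point $\bm\Sigma^*$ together with the pointwise structure of the $\mathcal C$-map. The key observation is that for any diagonal matrix $\mathcal A$, the cross-correlation operator can be written as a linear combination of diagonal translations: defining $[T_\beta U]_{\alpha,\alpha'} = U_{\alpha+\beta,\alpha'+\beta}$ (with indices taken mod $n$), one has $\mathcal A\star U = \sum_\beta \mathcal A_{\beta,\beta}\, T_\beta U$. By linearity of $J$, it therefore suffices to prove that $J$ commutes with every $T_\beta$, i.e.\ $J(T_\beta U) = T_\beta J(U)$ for each $\beta$ and each symmetric $U$.

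Step one is to verify that the $\mathcal C$-map itself is $T_\beta$-equivariant: $\mathcal C(T_\beta \bm\Sigma) = T_\beta \mathcal C(\bm\Sigma)$ for every $\bm\Sigma \in \text{PSD}_n$. Writing $T_\beta \bm\Sigma = P_\beta^\top \bm\Sigma P_\beta$ where $P_\beta$ is the circular-shift permutation matrix shows that $T_\beta$ preserves $\text{PSD}_n$. Moreover, if $\bm h \sim \mathcal N(0,\bm\Sigma)$ then the shifted vector $\tilde h_\alpha := h_{\alpha+\beta}$ is distributed as $\mathcal N(0, T_\beta \bm\Sigma)$, so a change of variable inside the defining expectation~\eqref{eq:cmap1} gives
\begin{equation*}
[\mathcal C(T_\beta \bm\Sigma)]_{\alpha,\alpha'} = \sigma_w^2\,\mathbb E_{\bm h\sim \mathcal N(0,\bm\Sigma)}[\phi(h_{\alpha+\beta})\phi(h_{\alpha'+\beta})] + \sigma_b^2 = [T_\beta \mathcal C(\bm\Sigma)]_{\alpha,\alpha'}.
\end{equation*}

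Step two is to use that the fixed point is itself invariant under every diagonal translation: since $\Sigma^*_{\alpha,\alpha'} = q^*(\delta_{\alpha,\alpha'} + (1-\delta_{\alpha,\alpha'})c^*)$ depends only on whether $\alpha=\alpha'$, we have $T_\beta \bm\Sigma^* = \bm\Sigma^*$. Plugging $\bm\Sigma = \bm\Sigma^* + \epsilon U$ into the equivariance identity from step one and using $T_\beta \bm\Sigma^* = \bm\Sigma^*$ yields $\mathcal C(\bm\Sigma^* + \epsilon T_\beta U) = T_\beta\, \mathcal C(\bm\Sigma^* + \epsilon U)$. Differentiating both sides at $\epsilon = 0$ gives $J(T_\beta U) = T_\beta J(U)$, and multiplying by $\mathcal A_{\beta,\beta}$ and summing over $\beta$ gives $J(\mathcal A \star U) = \mathcal A \star J(U)$, which is the claim.

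The one place requiring care is the change of variable in step one: we need $T_\beta$ to intertwine the Gaussian law with its translated covariance, which is immediate once $T_\beta$ is recognized as a conjugation by a permutation matrix. Everything else is a linearization of a purely symbolic equivariance, so I do not anticipate any real obstacle; the structural content of the theorem is entirely captured by the invariance $T_\beta \bm\Sigma^* = \bm\Sigma^*$ together with the pointwise nature of $\mathcal C$.
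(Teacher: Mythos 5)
Your proof is correct, but it takes a genuinely different route from the paper's. The paper first proves a structural lemma (Lemma~\ref{lemma:representation}) that computes $J$ explicitly on the canonical basis $\{V_{\alpha,\alpha'}\}$ of symmetric matrices --- off-diagonal basis elements are eigenvectors with eigenvalue $\chi_{c^*}$, while diagonal ones pick up a coupling $\kappa\sum_{\alpha'\neq\alpha}V_{\alpha,\alpha'}$ --- and then deduces commutativity by exhibiting the eigenspaces $V_{\text{o.d.}}$ and $V_{\text{d}}=\spn\{\tilde V_\alpha\}$, checking that $\mathcal A\,\star$ preserves each, and diagonalizing in a Fourier basis. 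You instead decompose $\mathcal A\star U=\sum_\beta \mathcal A_{\beta,\beta}T_\beta U$ into diagonal translations, prove the equivariance $\mathcal C(T_\beta\bm\Sigma)=T_\beta\mathcal C(\bm\Sigma)$ directly from the fact that $[\mathcal C(\bm\Sigma)]_{\alpha,\alpha'}$ depends only on the $(\alpha,\alpha')$ marginal of the Gaussian, and linearize at the translation-invariant fixed point. Your argument is more conceptual and requires no computation of $J$ at all; notably, it does not need the paper's implicit assumption $\chi_{q^*}\neq\chi_{c^*}$ (which the paper invokes to define $\tilde V_\alpha=V_{\alpha,\alpha}+\tfrac{\kappa}{\chi_{q^*}-\chi_{c^*}}\sum_{\alpha'\neq\alpha}V_{\alpha,\alpha'}$), so it covers the degenerate case cleanly. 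What the paper's computation buys in exchange is the explicit spectral data --- the eigenvalues $\chi_{q^*},\chi_{c^*}$, the constant $\kappa$, and the eigenspace decomposition of eqn.~\eqref{eqn:decomposition} --- which the main text needs independently for the linearized dynamics and depth-scale analysis; your proof establishes the commutation but not that additional structure. The only points deserving a word of care in your write-up are that $T_\beta$ preserves symmetry and positive semidefiniteness (which you correctly handle via conjugation by the cyclic permutation matrix) and that the Gateaux derivative of $\mathcal C$ at $\bm\Sigma^*$ exists in the relevant directions, which is already presupposed by the definition of $J$.
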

Let $\{V_{\alpha, \alpha'}\}_{0\leq \alpha\leq \alpha'\leq n-1}$ be the canonical basis of the space of $n\times n$ symmetric matrices, i.e. $[V_{\alpha, \alpha'}]_{\bar\alpha, \bar\alpha'} = 1$ if $(\alpha, \alpha')=(\bar\alpha, \bar\alpha')$ or $(\bar\alpha', \bar\alpha)$ and 0 otherwise. We claim the following:
\begin{lemma}\label{lemma:representation}
The Jacobian $J$ has the following representation:
\begin{itemize}
\item For the off-diagonal terms (i.e. $\alpha\neq \alpha'$), 
\begin{equation}
    JV_{\alpha,\alpha'} = \chi_{c^*} V_{\alpha,\alpha'}.  
\end{equation}
\item For the diagonal terms,  
\begin{equation}
    JV_{\alpha, \alpha} = \chi_{q^*} V_{\alpha,\alpha} + \kappa \sum_{\alpha'\neq \alpha} V_{\alpha,\alpha'}\,,  
\end{equation}
where $\kappa$ is given by,
\begin{equation}
    \kappa = \frac{\sigma_\omega^2}{2} \mathbb E_{\bm h\sim\mathcal N(0,\bm C^*)} \phi(h_1)\phi''(h_2) ,\; h_1\neq h_2\,. 
\end{equation}
\end{itemize}
\end{lemma}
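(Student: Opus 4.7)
The plan is to exploit the \emph{locality} of the $\mathcal{C}$-map: by inspection of eqn.~\eqref{eq:cmap1}, $[\mathcal{C}(\bm{\Sigma})]_{\alpha,\alpha'}$ depends on $\bm{\Sigma}$ only through the $2\times 2$ marginal covariance of $(h_\alpha,h_{\alpha'})$, and hence only through the three scalars $\Sigma_{\alpha,\alpha}$, $\Sigma_{\alpha',\alpha'}$, $\Sigma_{\alpha,\alpha'}$ (reducing to the single scalar $\Sigma_{\alpha,\alpha}$ when $\alpha=\alpha'$). This makes $J$ extremely sparse in the canonical basis $\{V_{\alpha,\alpha'}\}$, so it suffices to compute a handful of partial derivatives of the scalar function
\[
f(a,b,c) := \sigma_\omega^2\,\mathbb{E}\bigl[\phi(h_1)\phi(h_2)\bigr]+\sigma_b^2,\qquad (h_1,h_2)\sim\mathcal{N}\!\left(0,\begin{pmatrix}a & c\\ c & b\end{pmatrix}\right),
\]
evaluated at the fixed-point values $a=b=q^*$, $c=q^*c^*$, together with the univariate analogue for the diagonal output.

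The key technical tool will be Price's theorem (equivalently, Gaussian integration by parts, or the diffusion equation for the Gaussian density in covariance space), which for $F(h_1,h_2)=\phi(h_1)\phi(h_2)$ yields $\partial_c\mathbb{E}\,F=\mathbb{E}\,\phi'(h_1)\phi'(h_2)$ and $\partial_a\mathbb{E}\,F=\tfrac12\mathbb{E}\,\phi''(h_1)\phi(h_2)$, and for the univariate $F(h)=\phi(h)^2$ yields $\partial_a\mathbb{E}\,F=\mathbb{E}[\phi(h)\phi''(h)+\phi'(h)^2]$. Multiplying by $\sigma_\omega^2$ and evaluating at $\bm{\Sigma}^*$ reproduces the three constants $\chi_{c^*}$, $\kappa$, and $\chi_{q^*}$ exactly as defined in the statement.

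With these scalar derivatives in hand, the remainder is bookkeeping. Perturbing $\bm{\Sigma}^*$ along $V_{\alpha,\alpha'}$ with $\alpha\neq\alpha'$ changes only the entries $\Sigma_{\alpha,\alpha'}=\Sigma_{\alpha',\alpha}$, so by locality the only affected output positions are $(\alpha,\alpha')$ and $(\alpha',\alpha)$, each with derivative $\chi_{c^*}$; this gives $JV_{\alpha,\alpha'}=\chi_{c^*}V_{\alpha,\alpha'}$. Perturbing along $V_{\alpha,\alpha}$ changes only $\Sigma_{\alpha,\alpha}$ and affects (i) the diagonal output $[\mathcal{C}]_{\alpha,\alpha}$ with derivative $\chi_{q^*}$, and (ii) every off-diagonal output $[\mathcal{C}]_{\alpha,\beta}$ with $\beta\neq\alpha$, entering through the ``$a$'' slot of $f$ with derivative $\kappa$; summing these contributions yields $\chi_{q^*}V_{\alpha,\alpha}+\kappa\sum_{\beta\neq\alpha}V_{\alpha,\beta}$, as required.

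The only non-routine step is the Price's-theorem calculation; everything else is symmetry and careful indexing. The main thing to be careful about is that the symmetric basis element $V_{\alpha,\alpha'}$ encodes a \emph{simultaneous} perturbation of the two entries $\Sigma_{\alpha,\alpha'}$ and $\Sigma_{\alpha',\alpha}$, so I would parametrize $\bm{\Sigma}$ by its upper-triangular entries throughout in order to avoid spurious factors of $2$ or $\tfrac12$. It is also worth noting explicitly where the special structure of $\bm{\Sigma}^*$ from eqn.~\eqref{eqn:fixed-point-equation} enters: precisely at the step of evaluating the partial derivatives, where that structure forces $\chi_{c^*}$, $\chi_{q^*}$, and $\kappa$ to be spatial-location independent constants, which is exactly what makes the final eigen-decomposition clean.
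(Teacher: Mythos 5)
Your proof is correct, and its skeleton --- perturb $\bm\Sigma^*$ along each canonical basis element $V_{\alpha,\alpha'}$, use the locality of the $\mathcal C$-map (each output entry depends only on the $2\times2$ marginal of $(h_\alpha,h_{\alpha'})$), and read off the entry-wise first-order coefficients --- is the same as the paper's. The genuine difference is the computational engine for the three scalar derivatives. The paper parametrizes the correlated pair explicitly as $h_1=\sqrt{q^*}u_1$, $h_2=\sqrt{q^*}(cu_1+\sqrt{1-c^2}\,u_2)$ with independent standard normals, Taylor-expands $\phi$ in $\epsilon$, and (for the $\kappa$ term) performs one Gaussian integration by parts via $\mathbb E[u f(u)]=\mathbb E[f'(u)]$; the factor $\tfrac12$ in $\kappa$ emerges there from differentiating $q(\epsilon)=\sqrt{q^*+\epsilon-(c^*)^2q^*}$. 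You instead invoke Price's theorem (equivalently the heat-equation identities $\partial_{\Sigma_{12}}\mathbb E F=\mathbb E[\partial_1\partial_2F]$ and $\partial_{\Sigma_{11}}\mathbb E F=\tfrac12\mathbb E[\partial_1^2F]$), which delivers $\chi_{c^*}$, $\kappa$, and $\chi_{q^*}$ in one stroke and makes the $\tfrac12$ transparent. Since Price's theorem is itself proved by Gaussian integration by parts, the two routes are equivalent at bottom, but yours is more systematic and less prone to the factor-of-two pitfalls you explicitly flag, while the paper's is more self-contained (it does not cite an external identity). Your bookkeeping is also right: the symmetric basis element perturbs $\Sigma_{\alpha,\alpha'}$ and $\Sigma_{\alpha',\alpha}$ simultaneously so that both output positions $(\alpha,\alpha')$ and $(\alpha',\alpha)$ pick up the coefficient $\chi_{c^*}$, a diagonal perturbation feeds the ``variance slot'' of every off-diagonal output in row and column $\alpha$ with coefficient $\kappa$, and the special form of $\bm\Sigma^*$ is exactly what makes all three coefficients independent of spatial location --- all consistent with the paper's argument.
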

We first prove Theorem \ref{theorem:commute} assuming Lemma \ref{lemma:representation}, and afterwards we prove the latter. 
\begin{proof}[Proof of Theorem \ref{theorem:commute}]
It is clear that 
$$V_{\text{o.d.}} = {\rm span}\{V_{\alpha, \alpha'}: \alpha\neq \alpha'\}$$ 
is an eigenspace of $J$ with eigenvalue $\chi_{c^*}$. Here ${\rm span}\{X\}$ denotes the linear span of $X$.  
For $\chi_{q^*}\neq \chi_{c^*}$, define,
$$
\tilde V_{\alpha} = V_{\alpha,\alpha} +\frac \kappa {\chi_{q^*} - \chi_{c^*}}\sum_{\alpha'\neq \alpha} V_{\alpha,\alpha'}. 
$$
It is straightforward to verify that 
$$
V_{\text{d}} = {\rm span}\{\tilde V_{\alpha}:\alpha\in\sp\}
$$ 
is an eigenspace of $J$ with eigenvalue $\chi_{q^*}$ and the direct sum $V_{\text{d}} \bigoplus V_{\text{o.d.}}$ is the whole space of $ n\times n$ symmetric matrices.  
Note that $J$ acts on $V_{\text{o.d.}}$ in a pointwise fashion and that $\mathcal A$ maps $V_{\text{o.d.}}$ onto itself (one can form an eigen-decomposition of $\mathcal A$ (and $J$) in $V_{\text{o.d.}}$ using Fourier matrices; see below for details.) 
Thus $\mathcal A$ commutes with $J$ in $V_{\text{o.d.}}$. It remains to verify that they also commute in $V_{\text{d}}$.

,A key observation is that $\{\tilde V_{\alpha}\}_{\alpha\in\sp}$ has a nice group structure,
$$
\{\tilde V_{\alpha}\}_{\alpha\in\sp} = \{V_{\alpha, \alpha}\star\tilde V_{0}\}_{\alpha\in\sp}\,,
$$
which we can use it to form a new basis for $V_{\text{d}}$,
\begin{equation}
V_{\text{d}} = {\rm span}\{ U_\alpha: U_\alpha = F_\alpha \star \tilde V_0, \alpha \in\sp\}\,,
\end{equation}
where $F_\alpha$ is the diagonal matrix formed by the $\alpha$-th row of the $n\times n$ Fourier matrix, i.e. 
$F_\alpha = {\rm diag}((f_{\alpha, \alpha'})_{\alpha'\in\sp})$ with $f_{\alpha, \alpha'} = \frac 1 {\sqrt n} e^{2\pi i \alpha\alpha'\pi/n}$. 
Since each $F_\alpha$ is an eigen-vector of the $2D$ convolutional operator $\mathcal A \star \cdot$  ($\mathcal A$ is diagonal), 
$$
\mathcal A \star (J U_\alpha) = \chi_{q^*} \mathcal A \star  U_\alpha\mathcal = \chi_{q^*} \mathcal A \star F_\alpha \star \tilde V_{0} = \chi_{q^*}  (\mathcal A \star F_\alpha) \star \tilde V_{0} = \chi_{q^*} \lambda_\alpha F_\alpha \star \tilde V_{0} =\chi_{q^*} \lambda_\alpha U_\alpha =  J (\mathcal A \star  U_\alpha)
$$
where $\lambda_\alpha$ is the eigenvalue of $F_\alpha$. This finishes our proof.  
\end{proof}

\begin{proof}[Proof of Lemma \ref{lemma:representation}]
We first consider perturbing the off-diagonal terms. Let $\epsilon$ be a small number and $\alpha\neq \alpha'$. Note that for $(\bar\alpha, \bar\alpha')\notin \{(\alpha, \alpha'), (\alpha', \alpha)\}$,  
\begin{equation}
[\mathcal C (\bm \Sigma^* +\epsilon V_{\alpha, \alpha'})]_{\bar\alpha, \bar\alpha'} =  [\mathcal C (\bm \Sigma^*)]_{\bar\alpha, \bar\alpha'}
\end{equation}
and 
\begin{equation}
[\mathcal C (\bm \Sigma^* +\epsilon V_{\alpha, \alpha'})]_{\alpha, \alpha'} = \sigma_\omega^2\mathbb E \phi(h_1)\phi(h_2) + \sigma_b^2,  
\end{equation}
where $(h_1, h_2)\sim \mathcal N (0, Q)$ with $Q_{11} = Q_{22} = q^*$ and $Q_{12} = Q_{21} = c^*q^* + \epsilon $. Let $c = c^* + \epsilon/q^*$ and choose two independent random variables $u_1$, $u_2\sim\mathcal N (0,1)$. Then,
\begin{equation}\label{equ:001}
[\mathcal C (\bm \Sigma^* +\epsilon V_{\alpha, \alpha'})]_{\alpha, \alpha'} = \sigma_\omega^2\mathbb E \phi(\sqrt{q^*}u_1)\phi(\sqrt{q^*}(c u_1 + \sqrt{1- c^2} u_2))+\sigma_b^2.   
\end{equation}
Taylor expanding the term $\phi(\sqrt{q^*}(c u_1 + \sqrt{1- c^2} u_2))$ about the point $\sqrt{q^*}(c^* u_1 + \sqrt{1- (c^*)^2} u_2)$, one can show,
\begin{equation}
[\mathcal C (\bm \Sigma^* +\epsilon V_{\alpha, \alpha'})]_{\alpha, \alpha'} = c^*q^* + \chi_{c^*} \epsilon + O(|\epsilon|^2)\,,
\end{equation}
which proves the first statement of Lemma \ref{lemma:representation}.  

To prove the second statement, let $\alpha$ be fixed and perturb $\bm \Sigma^*$ by $\epsilon V_{\alpha, \alpha}$. Note that all the terms are unchanged except the ones in the $\alpha$-th row or $\alpha$-th column. 
It is straightforward to show (see~\cite{poole2016}) that  
\begin{equation}
[\mathcal C (\bm \Sigma^* +\epsilon V_{\alpha, \alpha})]_{\alpha, \alpha} = q^* + \chi_{q^*} \epsilon + O(|\epsilon|^2).  
\end{equation}
For any $\alpha'\neq \alpha$, 
\begin{equation}
[\mathcal C (\bm \Sigma^* +\epsilon V_{\alpha, \alpha})]_{\alpha', \alpha} = \sigma_\omega^2\mathbb E \phi(\sqrt{q^*}u_1)\phi(\sqrt{q^*}c^* u_1 + q(\epsilon) u_2))+ \sigma_b^2\,,
\end{equation}
where $u_1$ and $u_2$ are the same as in eq.\eqref{equ:001} and,
\begin{equation}
q(\epsilon) = \sqrt{q^* +\epsilon - (c^*)^2 q^*}\,.
\end{equation}
We can then Taylor expand $q(\epsilon)$ about $\sqrt{q^* - (c^*)^2 q^*}$, $\phi(\sqrt{q^*}c^* u_1 + q(\epsilon) u_2))$ about $(\sqrt{q^*}c^* u_1 + \sqrt{q^* - (c^*)^2 q^*} u_2)$, 
and apply one integration by parts to the second variable (namely, apply the identity $\mathbb E u_1f(u_1) = \mathbb E f'(u_1)$) to find,
\begin{equation}
[\mathcal C (\bm \Sigma^* +\epsilon V_{\alpha, \alpha})]_{\alpha', \alpha} = c^*q^* + \kappa \epsilon + O(|\epsilon|^2).  
\end{equation}
\end{proof}

\section{Construction of Random Orthogonal Kernels}
\subsection{Computational Complexity}
For simplicity, consider constructing a $\Bbbk\times \Bbbk\times c\times c$ orthogonal kernel. The complexity can be roughly determined as follows:
\begin{enumerate}
\item Constructing $O(\Bbbk)$ $c\times c$ symmetric orthogonal matrices takes $O(\Bbbk c^3)$ steps. 
\item For $j=1, \dots, \Bbbk-1$, convolving a $j\times j$ (each entry is a $c\times c$ matrix) matrix with a $2\times 2$ matrix requires $O(j^2)$ matrix multiplications between two $c\times c$ matrices. Since each matrix multiplication costs $O(c^3)$, a total number of $O((\Bbbk c)^{3})$ steps is required for block-wise matrix convolutions.
\item In sum, the computational complexity is about $O((\Bbbk c)^{3})$.
\end{enumerate}

\subsection{Delta Orthogonal Kernels}
  \begin{algorithm}[h!]
    \caption{2D Delta orthogonal kernels for CNNs, available in TensorFlow via the $\operatorname{ConvolutionDeltaOrthogonal}$ initializer.}
    \label{alg:delta}
    \begin{algorithmic}
    \STATE {\bfseries Input:} $\Bbbk$ kernal size, $c_{in}$ number of input channels, $c_{out}$ number of output channels.
    
        {\bf Return: }a $\Bbbk\times \Bbbk \times c_{in}\times c_{out}$ tensor $K$
   
       {\bf Step 1.} Randomly generate a $c_{in}\times c_{out}$ matrix $H$ with orthonormal rows.
       
       {\bf Step 2.} Define a $\Bbbk\times \Bbbk\times c_{in} \times c_{out}$ tensor $K$ in the following way: for $\beta, \beta' $ in $0, 1, \dots, \Bbbk-1$, for $i = 0, \dots, c_{in}-1$, $j=0, \dots, c_{out}-1$, set 
    $$
    K(\beta, \beta', i, j) = 
    \begin{cases}
    H(i, j), &{\rm if }\,\,  \beta =\beta'=[\Bbbk/2] 
    \\
    0,  &{\rm otherwise. }
    \end{cases}
    $$
    \end{algorithmic}
    \end{algorithm}
    
\newpage
\section{Phase Diagram and Vanishing/Exploding gradients}

\subsection{Phase Diagram}

Figure \ref{fig:order-2-chaos} shows the phase diagram derived from the mean field theory of signal propagation in fully-connected networks, reproduced from \cite{pennington2017}. It depicts the ordered and chaotic phases (with vanishing and exploding gradients, respectively) separated by a transition. The variation in value of $q^*$ along the critical line is shown in color. As discussed in the main text, it also applies to the ordered-to-chaotic phase transition of CNNs.
\begin{figure}[t]
\centering
\begin{minipage}{.42\textwidth}
  \centering
  \vspace{-0.5cm}
  \includegraphics[width=0.8\textwidth]{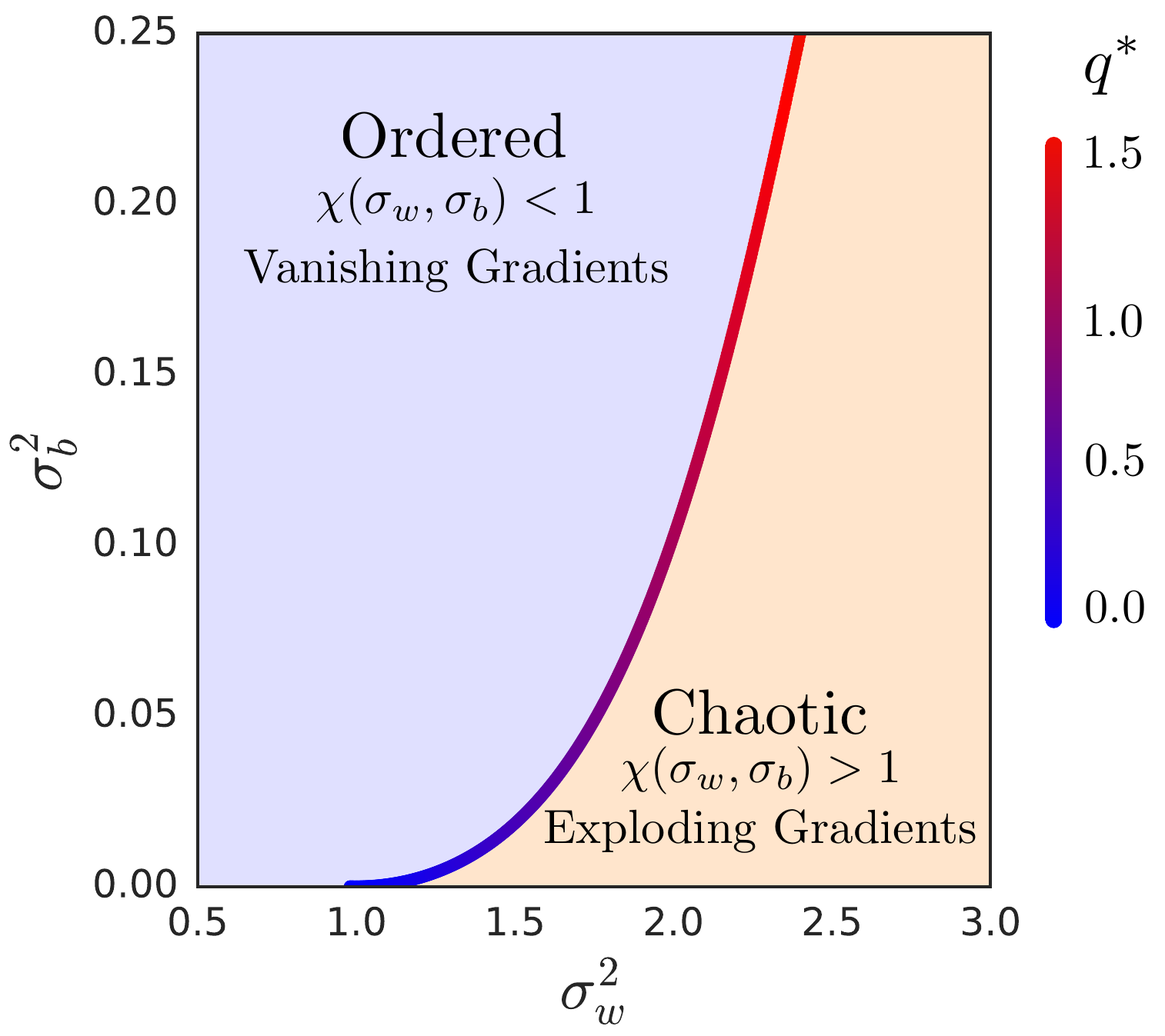}
\caption{Phase diagram for fully-connected networks (reproduced from \cite{pennington2017}). In our analysis, we find they also apply to CNNs.}
\label{fig:order-2-chaos}
\end{minipage}%
\hspace{1.5cm}
\begin{minipage}{.42\textwidth}
  \centering
  \vspace{0.3cm}
  \includegraphics[width=0.90\textwidth]{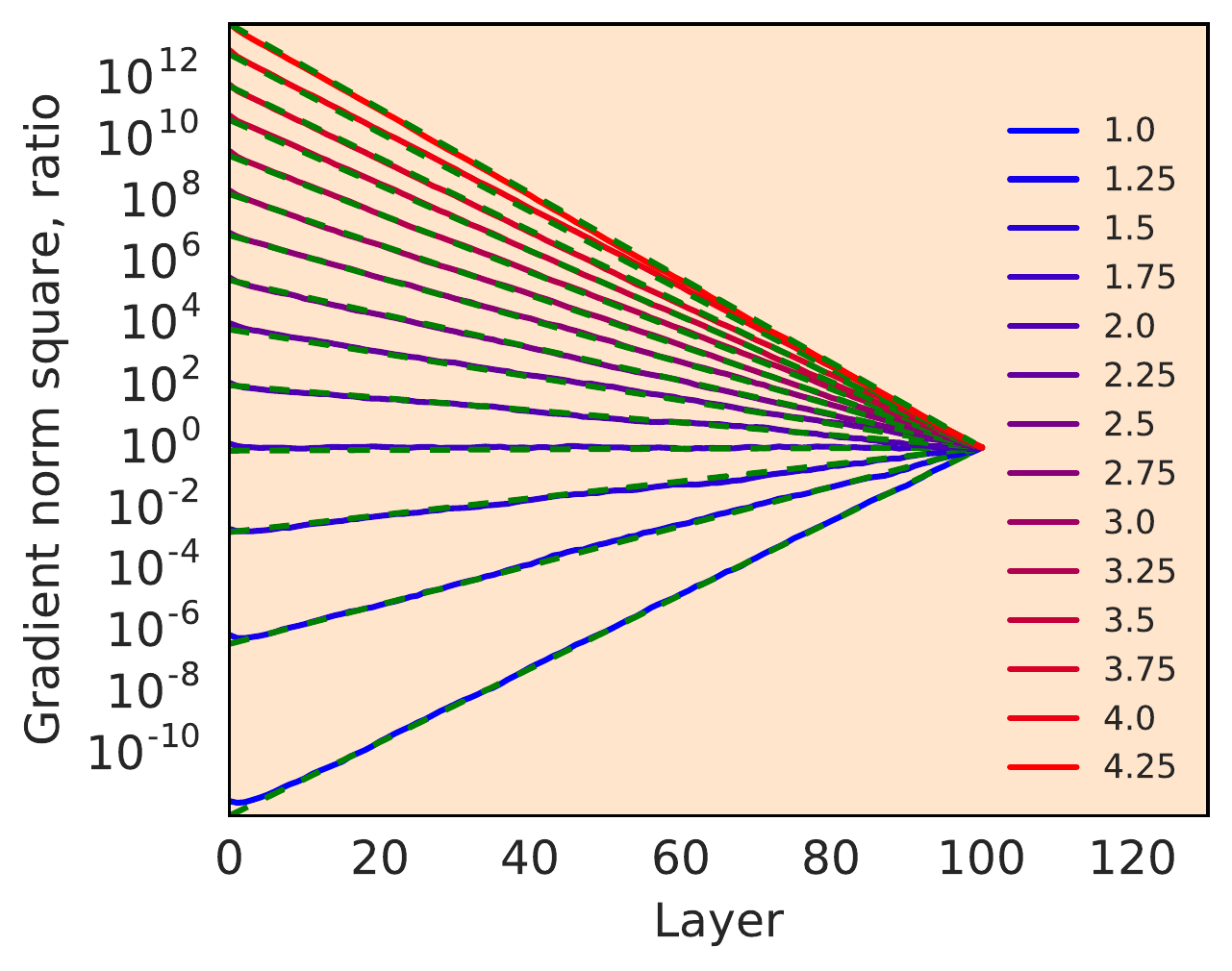}
  \vspace{-0.05cm}
\caption{Vanishing/exploding gradients in a random CNN as the weight variance $\sigma^2_w$ is swept so as to transition from the ordered phase (blue curves) to the chaotic phase (red curves). Mean field theory predicts an exponential decay/growth, which is overlaid (green, dashed).}
\label{fig:gradVanish}
\end{minipage}
\end{figure}

\subsection{Vanishing and Exploding Gradients}

Figure \ref{fig:gradVanish} depicts the behavior of gradients in an $L = 100$ layer deep random CNN as $\sigma^2_w$ is varied across the phase boundary, from the ordered to the chaotic phase. In this case, the input size $n = 10$, kernel size $2k+1 = 3$, and number of channels $c=2000$. The input was synthetic and generated i.i.d from normal distributions with a random spatial covariance matrix.  The $y$-axis plots the squared norm of the gradient with respect to the weights in layer $\ell$, $||\nabla_{\bm W^{\ell}} h^{L} ||^{2}_{2}$, with $\ell = 0, ..., 100$, relative to the last layer gradient, $||\nabla_{\bm W^{L}} h^{L} ||^{2}_{2}$. The bias variance is fixed, $\sigma^2_b = 0.05$, while the weight variance is swept from $\sigma^2_w = 1.0$ (blue curves), where gradients vanish exponentially as a function of layer distance $L - \ell$ from the output, to $\sigma^2_w = 4.25$ (red curves), where gradients explode exponentially. The mean field theory prediction (overlaid in dashed green) gives excellent agreement with the empirical result. 

\section{Distribution of singular values of weight matrices}
\label{}
The end-to-end Jacobian $\bm J$ depends on the matrix of weights $\bm W^{l}$, and the singular value distribution of the latter plays a key role, as discussed in the main text. Figure \ref{fig:SVD_weights} compares the singular value distribution of weight matrices $\bm W^{l}$ in the convolutional vs. fully-connected setting. In more detail, $\bm W^{l}$ in the convolutional case can be considered an $n \times n$ circulant tiling of $c \times c$ dense blocks, where each matrix element is generated i.i.d. from $\mathcal {N}(0, 1/(c(2k+1)))$.
For fixed $n=26$ 
and $2k+1=5$ 
we compute the singular value distribution, as $c$ increases, for single draws. This is compared to the distribution for the weight matrix in the fully-connected setting, obtained from a dense $nc \times nc$ matrix $\bm W^{l}$ whose entries are drawn i.i.d from $\mathcal{N}(0, 1/(nc))$. We empirically find the agreement between the two improves as the channel number increases, suggesting that the random matrix theory analysis of \cite{pennington2017, PenningtonSG18} carries over to the convolutional setting. 

\section{Multiple depth scales in signal propagation}
\label{sec:SM_depthscale}

Figure \ref{fig:Fourier} empirically demonstrates the existence of multiple depth scales, as discussed in Section \ref{sec:depth}. We consider an ensemble of random CNNs and compute the average covariance matrix $\bm \Sigma^{l}$ as a function of depth. We consider networks with $\erf$ nonliearities with $\sigma_w=\frac{3}{2}$ and $\sigma_b=\frac{1}{2}$ applied to $1$D images of size $n=10$. The initial data covariance $\bm \Sigma^{0}$ is chosen so that $\bm \epsilon^{0} = \bm \Sigma^* - \bm\Sigma^{0}$ is small and has an off-diagonal structure. In particular, all entries of $\bm \epsilon^{0}$ except the first cyclic diagonal entries are taken to be zero, and that diagonal has Fourier transform given by $-\frac{1}{6}[1,\frac{2}{3},(\frac{2}{3})^3,(\frac{2}{3})^5,(\frac{2}{3})^4,(\frac{2}{3})^2,(\frac{2}{3})^4,(\frac{2}{3})^5,(\frac{2}{3})^3,\frac{2}{3}]$. We used a spatially non-uniform kernel of size $2k+1=3$, with weights $v = [0.025, 0.950, 0.025]$. We then averaged $\bm \Sigma^{l}$ over this ensemble of networks to construct $\bm \epsilon^{l}$. By decomposing the vector of first cyclic diagonal entries into Fourier modes, we can observe how the signal decays differently along different modes. Figure \ref{fig:Fourier} plots the absolute value of the coefficient of the Fourier decomposition as a function of depth. Our mean field theory predictions for the different depth scales are in excellent agreement with the empirical simulations.

\begin{figure}[t!]
\centering
\begin{minipage}{.42\textwidth}
  \centering
  \includegraphics[width=0.75\textwidth]{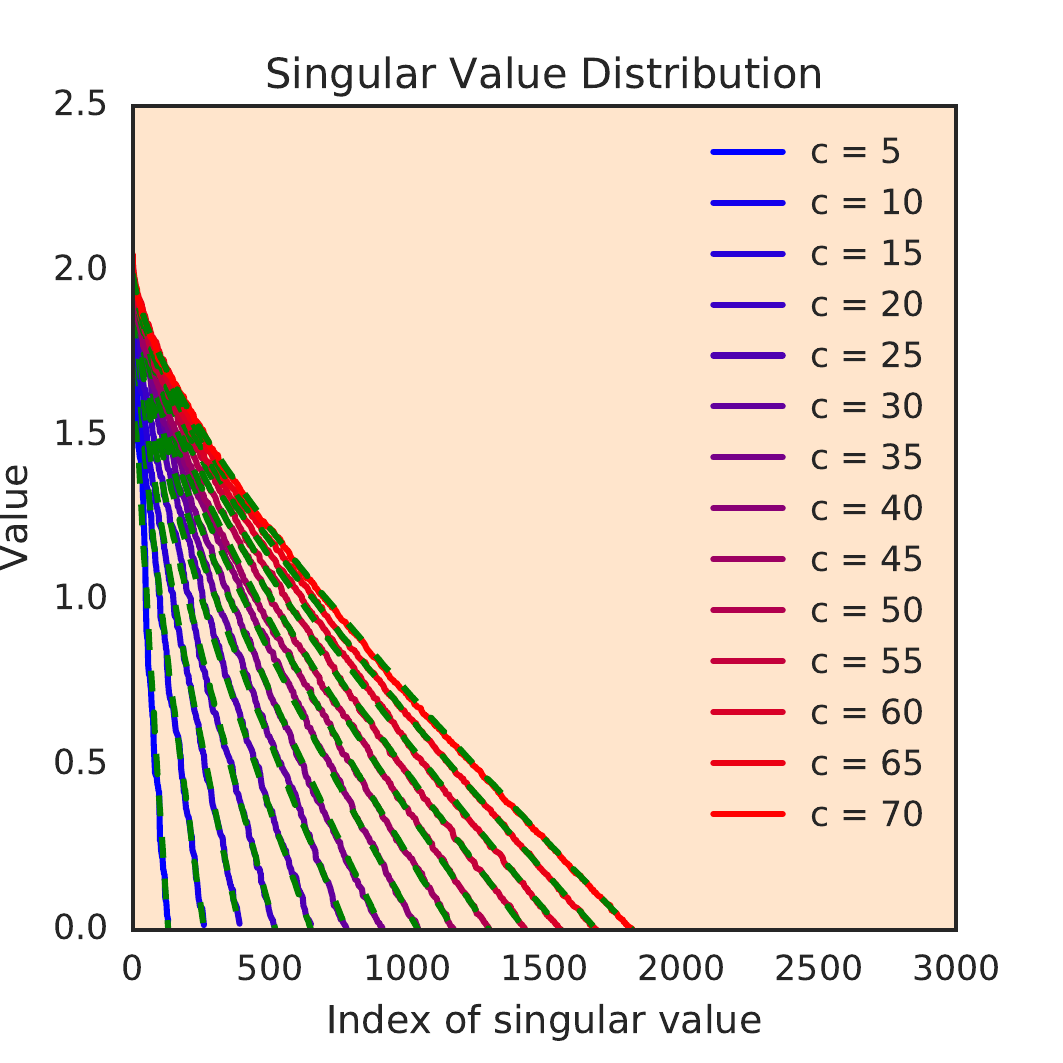}
\caption{A comparison of the singular value distribution of $\bm W^l$ for a block-circulant matrix of shape $nc \times nc$ with i.i.d entries (solid) against that of an $nc \times nc$ dense matrix (overlaid in dashed green), for fixed spatial width $n=26$ and kernel size $2k+1=5$. While there are discrepancies for very small $c$ that are not visible on this scale, there is good agreement as the channel size $c$ increases (blue to red curves).}
\label{fig:SVD_weights}
\end{minipage}%
\hspace{1.5cm}
\begin{minipage}{.42\textwidth}
  \centering
  \vspace{-0.35cm}
  \includegraphics[width=0.8\textwidth]{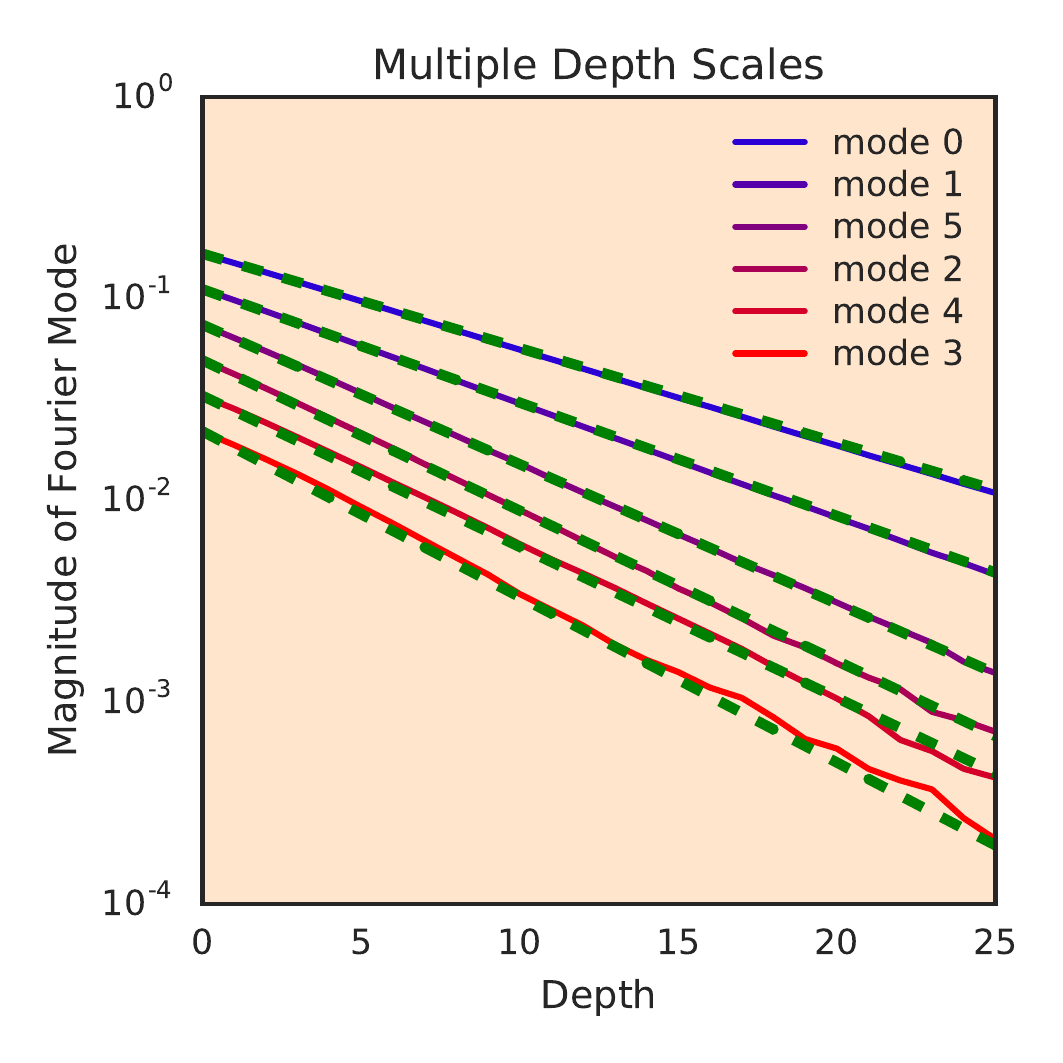}
  \vspace{-0.33cm}
\caption{The existence of multiple depth scales in random CNNs, with a comparison between empirical results (colored) and theoretical predictions (dashed green). The depth scales are reflected in the differing slopes of the curves, with the zero frequency mode decaying most slowly. See Section~\ref{sec:SM_depthscale} for a detailed description of the experiment.}
\label{fig:Fourier}
\end{minipage}
\end{figure}

%%%%%%%%%%%%%%%%%%%%%%%%%%%%%%%%%%%%%%%%%%%%%%%%%%%%%%%%%%%%%%%%%%%%%%%%%%%%%%%
%%%%%%%%%%%%%%%%%%%%%%%%%%%%%%%%%%%%%%%%%%%%%%%%%%%%%%%%%%%%%%%%%%%%%%%%%%%%%%%
% DELETE THIS PART. DO NOT PLACE CONTENT AFTER THE REFERENCES!
%%%%%%%%%%%%%%%%%%%%%%%%%%%%%%%%%%%%%%%%%%%%%%%%%%%%%%%%%%%%%%%%%%%%%%%%%%%%%%%
%%%%%%%%%%%%%%%%%%%%%%%%%%%%%%%%%%%%%%%%%%%%%%%%%%%%%%%%%%%%%%%%%%%%%%%%%%%%%%%
%%%%%%%%%%%%%%%%%%%%%%%%%%%%%%%%%%%%%%%%%%%%%%%%%%%%%%%%%%%%%%%%%%%%%%%%%%%%%%%
%%%%%%%%%%%%%%%%%%%%%%%%%%%%%%%%%%%%%%%%%%%%%%%%%%%%%%%%%%%%%%%%%%%%%%%%%%%%%%%

\end{document}